%% file: main.tex
\documentclass{article}
\usepackage[preprint]{neurips_2025}
\usepackage[utf8]{inputenc} % allow utf-8 input
\usepackage[T1]{fontenc}    % use 8-bit T1 fonts
\usepackage{hyperref}       % hyperlinks
\usepackage{url}            % simple URL typesetting
\usepackage{booktabs}       % professional-quality tables
\usepackage{amsfonts}       % blackboard math symbols
\usepackage{nicefrac}       % compact symbols for 1/2, etc.
\usepackage{wrapfig}
\usepackage{microtype}      % microtypography
\usepackage{xcolor}         % colors
\usepackage{microtype}
\usepackage{graphicx}
\usepackage{subcaption}
\usepackage[table]{xcolor}
\definecolor{gold}{RGB}{255,215,0}
\definecolor{silver}{RGB}{192,192,192}
\definecolor{bronze}{RGB}{205,127,50}

\usepackage{amsmath}
\usepackage{amssymb}
\usepackage{mathtools}
\usepackage{amsthm}
\usepackage{bm}
\usepackage[capitalize,noabbrev]{cleveref}
\usepackage{pifont}

\usepackage[utf8]{inputenc}
\usepackage{algorithm}
\usepackage{algorithmic}
\usepackage{subcaption}
\usepackage{wrapfig}

\input{symbols}
\theoremstyle{plain}
\newtheorem{theorem}{Theorem}[section]

\theoremstyle{definition}

\theoremstyle{remark}

\usepackage{subcaption}
\usepackage[textsize=tiny]{todonotes}

\usepackage[toc,page]{appendix}
\usepackage{tocloft}  % optional: nicer ToC formatting
\usepackage[most]{tcolorbox}

\newtcolorbox{modelbox}[1]{
  colback=gray!6,
  colframe=black!60,
  arc=8pt,
  boxrule=0.6pt,
  left=8pt,
  right=8pt,
  top=8pt,
  bottom=8pt,
  enhanced,
  breakable,
  title=\textbf{#1},
  fonttitle=\bfseries
}

\newtcolorbox{samplebox}{
  colback=white,
  colframe=gray!50,
  arc=5pt,
  boxrule=0.4pt,
  left=6pt,
  right=6pt,
  top=5pt,
  bottom=5pt,
  enhanced,
  breakable,
}

\usepackage{listings}
\usepackage{xcolor}

\usepackage{tcolorbox}
\tcbset{
  highlightcompact/.style={
    enhanced,
    frame hidden,        % removes black border
    colback=blue!4!white, % subtle light-blue background
    boxrule=0.2pt,
    arc=2pt,
    left=6pt,
    right=6pt,
    top=2pt,
    bottom=2pt,
    width=\linewidth,
    before skip=6pt,
    after skip=6pt,
    halign=center,          % centers text horizontally
    valign=center,
  }
}

\title{LangFlow: Continuous Diffusion Rivals Discrete in Language Modeling}

\author{%
  Yuxin Chen\thanks{Equal contribution}\\
  UIUC\\
  \texttt{nealchen@illinois.edu} \\
  \And
  Chumeng Liang\footnotemark[1]\\
  UIUC\\
  \texttt{chumengl@illinois.edu} \\
  \And
  Hangke Sui\footnotemark[1]\\
  UIUC\\
  \texttt{hangkes2@illinois.edu} \\
  \And
  Ruihan Guo\\
  UIUC\\
  \texttt{ruihang6@illinois.edu} \\
  \And
  Chaoran Cheng\\
  UIUC\\
  \texttt{chaoran7@illinois.edu} \\
  \And
  Jiaxuan You\\
  UIUC\\
  \texttt{jiaxuan@illinois.edu} \\
  \And
  Ge Liu\thanks{Corresponding author}\\
  UIUC\\
  \texttt{geliu@illinois.edu} \\
}

\begin{document}

\maketitle

\input{sections/abstract}
\input{sections/intro}
\input{sections/prelim}
\input{sections/theory}

\input{sections/method}
\input{sections/experiments}
\input{sections/related}
\input{sections/conclusion}
\section*{Impact Statement}

This paper presents work whose goal is to advance the field of Machine
Learning. There are many potential societal consequences of our work, none
of which we feel must be highlighted here.

\bibliography{iclr2025_conference}
\bibliographystyle{iclr2025_conference}
%\bibliographystyle{abbrv}

%%%%%%%%%%%%%%%%%%%%%%%%%%%%%%%%%%%%%%%%%%%%%%%%%%%%%%%%%%%%

\onecolumn
\newpage
\appendix
\appendixpage
\addappheadtotoc
\input{appendix/algo.tex}

\input{appendix/proof.tex}
\input{appendix/design.tex}

\input{appendix/ce-mse.tex}
\input{appendix/exp.tex}

\input{appendix/last.tex}
\end{document}

%% file: symbols.tex
\newcommand\ve{{\bm{e}}}

\newcommand\vp{{\bm{p}}}
\newcommand\vq{{\bm{q}}}

\newcommand\vu{{\bm{u}}}
\newcommand\vv{{\bm{v}}}

\newcommand\vx{{\bm{x}}}

\newcommand\vz{{\bm{z}}}

\newcommand\vtheta{{\bm{\theta}}}
\newcommand\vepsilon{{\bm{\epsilon}}}
\newcommand\R{\mathbb{R}}
\DeclareMathOperator{\mean}{\mathbb{E}}
\newcommand\N{{\mathcal{N}}}

\newcommand\mI{{\mathbf{I}}}
\newcommand\mE{{\mathbf{E}}}
\newcommand\T{^{\rm T}}
\newcommand\dif{\mathop{}\!\mathrm{d}}
\newcommand\Ls{{\mathcal{L}}}
\DeclareMathOperator{\sigmoid}{sigmoid}
\newcommand\KL{\mathcal{D}_{\mathrm{KL}}}
\newcommand\e{{\mathrm{e}}}

\newcounter{note}

%% file: sections/abstract.tex
\begin{abstract}

Continuous diffusion has been the foundation of high-fidelity, controllable, and few-step generation of many data modalities such as images. However, in language modeling, prior continuous diffusion language models (DLMs) lag behind discrete counterparts due to the sparse data space and the underexplored design space. In this work, we close this gap with \textbf{LangFlow}, the first continuous DLM to rival discrete diffusion, by connecting embedding-space DLMs to Flow Matching via Bregman divergence, alongside three key innovations:
(1) we derive a novel ODE-based NLL bound for principled evaluation of continuous flow-based language models;
(2) we propose an information-uniform principle for setting the noise schedule, which motivates a learnable noise scheduler based on a Gumbel distribution; and
(3) we revise prior training protocols by incorporating self-conditioning, as we find it improves both likelihood and sample quality of embedding-space DLMs with effects substantially different from discrete diffusion. Putting everything together, LangFlow rivals top discrete DLMs on both the perplexity (PPL) and the generative perplexity (Gen. PPL), reaching a PPL of \textbf{30.0} on LM1B and \textbf{24.6} on OpenWebText.
It even exceeds autoregressive baselines in zero-shot transfer on 4 out of 7 benchmarks. LangFlow provides the first clear evidence that continuous diffusion is a promising paradigm for language modeling. Homepage:
\begin{center}
\url{https://github.com/nealchen2003/LangFlow}
\end{center}
\end{abstract}
% suggested flow: 
% (1) continuous diffusion tops in many domains, is bolstered by a mature suite of advanced sampling techniques for achieving high fidelity and efficiency. 
% (2) however, prior effort in continous diffusion language models is inferior to discrete formulation, potentially due to (e.g., sparse signal in high-dimensional one=hot simplex?) and lack of dedicated design space tailored for language data.
%(3) We introduce LangFlow, the first continuous DLM that reach/ beat discrete on large-scale benchmarks, leveraging embedding-space.
%[DESCRIBE ALL technical contribution] We introduce 3 major innovations to unleash continuous DLM. 1) reparameterize embedding-space diffusion under the framework of Variational Flow-matching, yielding a theoretically justified cross-entropy traing objective that's highly effective.. 2) we proposed novel design space with two important design choices, gamma and what? talk about gumbel distirbution,  3) we derived a ODE NLL which improves upon diffusion ELBO, because xxx. Our formulation allowes benefiting from sampling techniques like self-conditioning...
% close with results (should mention exceeding AR on zero-shot), and impact/future potentials (e.g., this opens up new area/designs/framework, and further downstream optimization like few-step.

%% file: sections/intro.tex
\section{Introduction}

% background of diffusion
Diffusion models~\citep{sohl2015deep,song2019generative,ho2020denoising} have achieved remarkable success in generating data with continuous modalities, such as images~\citep{dhariwal2021diffusion,rombach2022high}, videos~\citep{ho2022video,blattmann2023stable}, molecular structures~\citep{watson2023novo,abramson2024accurate}, and robot behaviors~\citep{chi2025diffusion}.
A key reason behind this success is the flexibility of \emph{continuous diffusion}: it admits expressive latent trajectories, stable ODE/SDE-based sampling, and a mature toolbox of techniques such as self-conditioning, trajectory editing, and potential few-step acceleration via flow-based distillation.
These properties make continuous diffusion a powerful and versatile paradigm for generative modeling.
This success motivates extending diffusion models to categorical data, including graphs~\citep{vignac2022digress} and sequences~\citep{austin2021structured}, contributing to the broader goal of unifying cross-modality data generation under a single architecture~\citep{zhou2024transfusion,rojas2025diffuse}.

% why existing dlm does not perfectly fit
The most prominent milestone yet to be conquered for categorical diffusion is language modeling, for which diffusion has morphed into two distinct species.
On one hand, discrete diffusion~\citep{austin2021structured,lou2023discrete,sahoo2024simple} reformulates diffusion processes directly in the categorical space based on the continuous-time Markov chain (CTMC) theory.
While achieving competitive performance, discrete diffusion sacrifices expressive latent spaces, which limits controllable and few-step generation.
On the other hand, simplex diffusion~\citep{cheng2024categorical,song2025shortlisting,cheng2025alpha} applies continuous diffusion over the probability simplex of language data.
Although theoretically sound and capable of preserving diffusion tricks, this approach suffers from the extreme sparsity of the simplex space, making score estimation particularly challenging.

% Why embedding-space DLM is worth doing
These shortcomings refocus our attention on embedding-space diffusion~\citep{li2022diffusion,dieleman2022continuous,gulrajani2023likelihood}, an overlooked family of diffusion language models that inherently avoid sparsity and offer editability.
Embedding-space diffusion generates token embeddings progressively from Gaussian noise with editable generation paths and denser data spaces.
This makes embedding-space diffusion a promising direction to explore for language modeling.

% Challenges of embedding-space dlm
However, exploration of embedding-space DLMs has long been blocked by several challenges.
First, the theoretical grounding of embedding-space DLMs remains entangled.
Training objectives in prior work are either heuristic or cumbersome to implement, for example requiring dynamically sliced batches to optimize different objectives~\citep{gulrajani2023likelihood}.
Second, a reliable ODE-based estimation of perplexity (PPL) -- the primary evaluation metric in language modeling -- has yet to be established for embedding-space DLMs, as prior works rely exclusively on SDE-based bounds.
Third, the absence of PPL evaluation further obscures the goal of optimizing training techniques, leaving good design choices unclear.
Without a solid theoretical foundation for evaluation and improved training techniques, the true potential of embedding-space diffusion language models will always remain unrealized.

% Our contribution & how to solve the challenges
In this paper, we recognize that the ODE formulation of embedding-space DLMs naturally aligns with Flow Matching~\citep{lipman2023flow}. Building on this, we connect the cross-entropy objective to minimizing \textit{Bregman divergence}, providing a strong theoretical justification for its use in training LangFlow (\cref{sec:theory.bregman-fm}).
This theoretical connection allows us to derive a novel ODE-based upper bound of negative log-likelihood (NLL), which tackles the challenge of evaluating embedding-space DLMs and improves upon prior SDE-based bounds (\cref{sec:ode_sampling}).
Based on this framework, we further explore the training techniques of LangFlow.
Firstly, we elucidate that the noise scheduler of embedding-space diffusion needs to be significantly different from that of image diffusion, which stems from the discrepancy in data modality.
We then propose the information-uniform principle to attain the optimal noise schedule (\cref{sec:noise_scheduler}).
Secondly, we reveal how the effects of self-conditioning differ between discrete and continuous diffusion and rectify the training protocol of embedding-space DLMs (\cref{sec:self_conditioning}).
Both training techniques contribute greatly to the performance of LangFlow, driving its training to optimality.

% contribution
\textbf{Contributions:} To summarize, our contributions are three-fold:

\textbf{1)} We rethink embedding-space diffusion language models through Bregman Divergence Flow Matching.
This unlocks key theoretical insights and empirical gains, including a theoretically grounded cross-entropy loss that streamlines training, and a novel ODE-based negative log-likelihood (NLL) bound that supersedes previous SDE-based ELBOs and provides more accurate evaluation for embedding-space DLMs.
The embedding space formulation circumvents the high-dimensionality bottlenecks in diffusion models on the probability simplex and one-hot space.

% \textbf{1)} We establish a theoretical connection between embedding-space diffusion and Flow Matching, from which we construct an exact upper bound of negative log likelihood for evaluating embedding-space diffusion language models.
\textbf{2)} We elucidate the design space of embedding-space DLMs, and propose two key design choices that greatly improve training efficiency and sampling quality.
First, we introduce an information-uniform noise schedule with a new time-conditioning based on the logarithmic noise-to-signal ratio $\gamma$.
Our extensive profiling reveals a new optimal noise schedule given by the Gumbel distribution of $\gamma$, an observation that greatly differs from conclusions in the image generation domain.
Second, we reveal the underexplored discrepancy of self-conditioning between its effect on discrete diffusion and that on continuous diffusion, rectifying the training recipe of continuous diffusion language modeling.
Both design choices substantially enhance the performance of embedding-space DLMs.

% \textbf{2)} We elucidate the effect of two key training techniques for embedding-space diffusion, information-uniform noise scheduling and self-conditioning, which improve the sample quality significantly.

\textbf{3)} Combining everything together, we introduce LangFlow, the first continuous DLM that exceeds discrete DLMs on multiple tasks.
Specifically, LangFlow achieves a Perplexity (PPL) of 30.0 on LM1B and 24.6 on OpenWebText, surpassing all uniform-state discrete diffusion and matching the state-of-the-art masked diffusion.
On zero-shot transfer, LangFlow beats AR on 4 out of 7 benchmarks and masked diffusion on 3 out of 7 benchmarks.
Our unified framework re-establishes baselines for continuous diffusion language modeling, enabling future extensions to be built on a stronger foundation comparable to discrete diffusion.

% \paragraph{Contributions}
% We summarize our contributions as follows:
% \begin{itemize}
%     \item \textbf{Unified theoretical framework.} Theoretically, we connect embedding-space diffusion language models to variational flow matching, from which we build a unified framework to encompass all existing methods as its parameterizations.
%     \item \textbf{Design space of Emb-DLMs.} We systematically analyze the design choices of Emb-DLMs, providing substantial evidences of the design choices, especially in the training objective and the noise schedule, emphasizing the importance of designing modality-specific diffusion models.
%     \item \textbf{Empirical evidence for continuous diffusion in language modeling.} Empirically, we are the first ones to break the old-fashioned belief that continuous diffusion is significantly inferior to discrete diffusion in language modeling, suggesting the considerable potential of continuous diffusion language models.
% \end{itemize}
% }
\color{black}

%% file: sections/prelim.tex
\section{Preliminaries}

% \subsection{Flow Matching and Diffusion}
% \textbf{Flow Matching and Diffusion}
Flow Matching (FM)~\citep{lipman2023flow} is a generative modeling paradigm that learns a velocity field $\vu_t(\vz_t)$ to transport a simple prior $p_{\rm prior}$ (e.g., standard Gaussian) to the data distribution $p_{\rm data}$.
Starting from $\vz_0 \sim p_{\rm prior}$, solving the ordinary differential equation (ODE) $\dif\vz_t = \vu_t(\vz_t)\dif t$ yields $\vz_1 \sim p_{\rm data}$.

The marginal velocity field $\vu_t(\vz_t)$ is constructed by marginalizing conditional velocity fields.
Under mild regularity conditions, if a conditional velocity field $\vu_t(\vz_t \mid \vz)$ transforms $\vz_0 \sim p_{\rm prior}$ to a specific data point $\vz_1=\vz$, the marginal velocity field $\vu_t(\vz_t)$ generates $p_{\rm data}$:
\begin{equation}
\vu_t(\vz_t) = \mean[\vu_t(\vz_t \mid \vz) \mid \vz_t] = \int \vu_t(\vz_t \mid \vz) p(\vz \mid \vz_t) \dif\vz.
\end{equation}

Typical FM models utilize an affine Gaussian probability path as the conditional flow:
\begin{equation}\label{eq:fm}
    \vz_t = \alpha_t\vz + \sigma_t\vepsilon, \quad \vepsilon\sim p_{\rm prior},
\end{equation}
which corresponds to the conditional velocity field:
\begin{equation}
\vu_t(\vz_t \mid \vz) = \dot\sigma_t\frac{\vz_t-\alpha_t\vz}{\sigma_t} +\dot\alpha_t\vz,
\end{equation}
where $\alpha_t, \sigma_t$ are differentiable scalar schedules specifying the transformation, satisfying $\alpha_0=0,$ $\sigma_0=1$ and $\alpha_1=1, \sigma_1=0$, with time derivatives $\dot\alpha_t, \dot\sigma_t$.

In practice, computing the true marginal velocity field $\vu_t(\vz_t)$ is intractable.
Instead, it is approximated by a neural network $\vv_\vtheta(\vz_t,t)$, trained via conditional Flow Matching:
\begin{align}
\Ls_{\rm FM}(\vtheta) & =\int_0^1 \mean\left[ \left\| \vv_\vtheta(\vz_t,t) - \vu_t(\vz_t) \right\|^2 \right] \dif t \\
& = \int_0^1 \mean\left[ \left\| \vv_\vtheta(\vz_t,t) - \vu_t(\vz_t \mid \vz) \right\|^2 \right] \dif t + \text{const},
\end{align}
where the constant term is independent of $\vtheta$ and thus omitted during optimization.

Since $\vu_t(\vz_t \mid \vz)$ is linear in $\vz$, the velocity network can be equivalently parameterized using a denoiser $\hat\vz_\vtheta(\vz_t,t) \approx \mean[\vz \mid \vz_t]$:
\begin{equation}
\vv_\vtheta(\vz_t, t) = \vu_t\big(\vz_t \mid \hat\vz_\vtheta(\vz_t,t)\big).
\label{eq:velocity_denoiser}
\end{equation}

Flow Matching is equivalent to learning the probability flow ODE of standard diffusion models~\citep{sohl2015deep,song2019generative,ho2020denoising} when $p_{\rm prior}=\N(\bm 0,\mI)$.
Since we consistently use this prior, we use the terms Flow Matching and diffusion models interchangeably.

% \textbf{Variational Flow Matching}
% \label{sec:prelim-vfm}
% Variational Flow Matching (VFM)~\citep{eijkelboom2024variational} establishes a framework for training flow-based models using discrete counterparts.
% Suppose $\vz$ is a concatenation of $L$ components $(\vz^{(1)}, \vz^{(2)}, \ldots, \vz^{(L)})$. A critical insight of VFM is that the $i$-th component of the marginal velocity field $\vu_t(\vz_t)$ depends entirely on its corresponding marginal true posterior $p(\vz^{(i)} \mid \vz_t)$:
% \begin{align}
%     &\vu_t^{(i)}(\vz_t) = \mean[\vu_t^{(i)}(\vz_t \mid \vz^{(i)}) \mid \vz_t], \\
%     \text{where} \quad
%     &\vu_t^{(i)}(\vz_t \mid \vz^{(i)}) = \dot\sigma_t\frac{\vz_t^{(i)}-\alpha_t\vz^{(i)}}{\sigma_t} +\dot\alpha_t\vz^{(i)}.
% \end{align}
% This allows factorizing the intractable true posterior $p(\vz \mid \vz_t)$ into distinct per-component posteriors $p(\vz^{(i)} \mid \vz_t)$, which can be approximated by a variational network $q_\vtheta^{(i)}(\vz^{(i)} \mid \vz_t, t)$. $q_\vtheta^{(i)}$ can be then trained by minimizing the expected cross-entropy loss:
% \begin{equation}\label{eq:vfm-loss}
% \Ls_{\rm VFM}(\vtheta)=\int_0^1 \lambda(t) \mean\left[-\frac{1}{L}\sum_{i=1}^L\log q_\vtheta(\vz^{(i)} \mid \vz_t, t)\right] \dif t,
% \end{equation}
% where $\lambda(t) > 0$ is a weighting function. The unique optimum satisfies $q_{\vtheta^*}^{(i)}(\vz^{(i)} \mid \vz_t, t)=p(\vz^{(i)} \mid \vz_t)$, ensuring the learned velocity field $\vv_\vtheta(\cdot, t)$ matches the true marginal velocity field $\vu_t$.

%% file: sections/theory.tex
\section{LangFlow: Continuous Language Modeling via Flow Matching}
\label{sec:theory}

Existing methods for continuous diffusion language modeling typically train models by regressing denoised outputs, embeddings, or one-hot representations of the ground truth, while others rely on heuristic training objectives.
Furthermore, they lack a reliable ODE-based upper bound of the negative log-likelihood for evaluation.
In this section, we introduce \textbf{LangFlow}, a principled framework for continuous diffusion language models.
For training, we propose optimizing the cross-entropy loss, and establish a connection between this objective and Flow Matching~\citep{lipman2023flow} through Bregman divergence.
We also derive a novel ODE-based upper bound of the negative log-likelihood for reliable evaluation.
Figure~\ref{fig:mainfig} provides an overview of the proposed pipeline, including both training and sampling procedures.

\subsection{Training Flow Matching on Language}
\label{sec:theory.bregman-fm}
We begin with embedding-space diffusion for continuous diffusion language modeling.
Let $\mE \in \R^{V \times D}$ be an embedding matrix over a vocabulary of size $V$, mapping each text token $x^{(i)}$ to a $D$-dimensional vector $\ve_{x^{(i)}}$.
A token sequence $\vx = (x^{(1)}, \ldots, x^{(L)})$ is embedded as $\vz = (\ve_{x^{(1)}}, \ldots, \ve_{x^{(L)}}) \in \R^{L \times D}$, which serves as the generative target.

Flow Matching (FM)~\citep{lipman2023flow} defines a velocity field $\vu_t(\vz_t)$ that transports a Gaussian prior $p_\text{prior} = \N(\bm{0}, \mI)$ to the data distribution $p_\text{data}$ through the ODE $\dif\vz_t = \vu_t(\vz_t)\, \dif t$.
The standard formulation uses time $t \in [0,1]$ and an affine probability path with scalar schedules $\alpha_t$ and $\sigma_t$:

$$\vz_t = \alpha_t \vz + \sigma_t \vepsilon, \quad \vepsilon \sim \N(\bm{0}, \mI),$$

where $\alpha_t$ grows from $0 \to 1$ and $\sigma_t$ decays from $1 \to 0$ as $t$ goes from $0$ to $1$.
While natural, this formulation ties both the objective and the dynamics to a specific choice of time schedule.

\textbf{Time conditioning through $\gamma$-path.} 
To reduce this schedule dependence, we reparameterize the flow path using the logarithmic noise-to-signal ratio (logNSR) $\gamma$.
The motivation is that denoising difficulty is primarily controlled by noise level rather than by an arbitrary time index; this is also consistent with Variational Diffusion Models~\citep{kingma2021variational}.
For any valid path of the form $\vz_t=\alpha_t\vz+\sigma_t\vepsilon$, we define $\gamma_t = \log(\sigma_t^2/\alpha_t^2).$
In typical settings, $\dot\alpha_t>0$ and $\dot\sigma_t<0$, so $\gamma_t$ is strictly monotone decreasing.
This induces a bijection between $t \in [0,1]$ and $\gamma \in \mathbb{R}\cup\{\pm\infty\}$, effectively reparameterizing the flow in terms of $\gamma$ instead of $t$.
Since $\alpha_0=0$ and $\sigma_1=0$, the endpoints correspond to pure noise at $t=0$ ($\gamma=+\infty$) and clean data at $t=1$ ($\gamma=-\infty$).

Under this change of variables, the marginal distributions are preserved, and the training objective is equivalent up to the Jacobian factor $|\dot \gamma_t|$.
Restricting attention to variance-preserving (VP) paths satisfying $\alpha_\gamma^2+\sigma_\gamma^2=1$, we can rewrite the path directly in terms of $\gamma$:
\begin{align}
    \vz_\gamma = \alpha_\gamma \vz + \sigma_\gamma \vepsilon, 
    \qquad \vepsilon \sim \N(\bm{0}, \mI), \qquad \gamma \in \mathbb{R}\cup\{\pm\infty\},
\end{align}
with
\begin{align}
    \sigma_\gamma^2 = \sigmoid(\gamma), 
    \qquad
    \alpha_\gamma^2 = 1-\sigma_\gamma^2 = \sigmoid(-\gamma).
\end{align}

We refer to this flow simply as the $\gamma$-path.
This unification inherently encompasses standard diffusion trajectories and makes explicit that learning is anchored to the noise level $\gamma$, rather than to a particular time parameterization.
We base our implementation on the $\gamma$-path and use $\gamma$ as our time conditioning variable instead of $t$; further motivation is provided in \cref{sec:noise_scheduler}.

\textbf{Training Flow Matching via Bregman Divergence.}
We now propose our training objective for Flow Matching on categorical data like language, where we first connect FM training to \textit{Bregman divergence} minimization~\citep{guzman2025exponential}.
For any convex function $f$, the Bregman divergence is
\begin{equation}
    \mathcal{D}_f(\vp, \vq) = f(\vp) - f(\vq) - \nabla f(\vq) \cdot (\vp - \vq),
\end{equation}
where $\vp$ denotes the target distribution and $\vq$ denotes the predictor.
Let the model output be $\hat\vx_\vtheta(\vz_\gamma,\gamma)$, whose $(i,k)$ entry approximates $\Pr(x^{(i)}=k\mid \vz_\gamma)$.
Let $\bm1_x$ denote the one-hot vector of token $x$.
We define the generic training objective along the $\gamma$-path as
\begin{equation}
\label{eq:bregman_obj}
  \Ls_f(\vtheta) = \mean_{\gamma \sim \pi, \vz_\gamma}\!\left[\frac{1}{L}\sum_{i=1}^L
  \mathcal{D}_f\!\left(\bm1_{x^{(i)}}, \hat\vx_\vtheta^{(i)}(\vz_\gamma,\gamma)\right)\right],
\end{equation}
where $\pi(\gamma)$ is a noise sampling distribution.
The key is that if $\vp$ is random and $\vq$ is fixed, then
\begin{align}
  \mean[\mathcal{D}_f(\vp, \vq)] &= \mean[f(\vp) - f(\vq) - \nabla f(\vq) \cdot (\vp - \vq)] \\
  &= \mean[f(\vp)] - f(\vq) - \nabla f(\vq) \cdot (\mean[\vp] - \vq) \\
  &= \mean[f(\vp)] - f(\mean[\vp]) + \mathcal{D}_f(\mean[\vp], \vq).
\end{align}
Applying this identity to $\vp=\bm1_{x^{(i)}}$ and $\vq=\hat\vx_\vtheta^{(i)}(\vz_\gamma,\gamma)$ yields (writing $\hat\vx_\vtheta^{(i)}(\vz_\gamma, \gamma)$ as $\hat\vx_\vtheta^{(i)}$):
\begin{align}
  \mean[\mathcal{D}_f(\bm1_{x^{(i)}}, \hat\vx_\vtheta^{(i)}) \mid \vz_\gamma] &= \mathcal{D}_f(\hat\vx_\star^{(i)}, \hat\vx_\vtheta^{(i)}) + \mean[f(\bm1_{x^{(i)}}) \mid \vz_\gamma] - f(\hat\vx_\star^{(i)}) \\
  &= \mathcal{D}_f(\hat\vx_\star^{(i)}, \hat\vx_\vtheta^{(i)}) + \text{const},
\end{align}
where $\hat\vx_\star^{(i)}=\mean[\bm1_{x^{(i)}}\mid \vz_\gamma]$ is the posterior distribution vector, with $k$-th element $\Pr(x^{(i)}=k\mid \vz_\gamma)$.
Therefore, minimizing $\Ls_f$ is equivalent (up to an additive constant) to posterior matching: for each $\vz_\gamma$, the minimizer of $\mean[\mathcal{D}_f(\bm1_{x^{(i)}}, \vq)\mid \vz_\gamma]$ over $\hat\vx_\vtheta^{(i)}$ is $\hat\vx_\star^{(i)}$.

The cross-entropy (CE) objective~\citep{dieleman2022continuous,eijkelboom2024variational} can be recovered as a special case of the above objective: choosing $f(\vp)=\vp\cdot\log\vp$ \footnote{Use $\lim_{x\to 0^+} x\log x = 0$ when any element of $\vp$ is 0.} yields
\begin{equation}
  \mathcal{D}_f(\vp, \vq) = \vp \cdot \log \vp - \vp \cdot \log \vq.
\end{equation}
For one-hot $\vp=\bm1_{x^{(i)}}$, this reduces to $\mathcal{D}_f(\bm1_{x^{(i)}},\vq)=-\log q^{(x^{(i)})}$. We define the cross-entropy loss at noise level $\gamma$ as:
\begin{equation}
\label{eq:ce_loss_gamma}
  \ell_{\rm CE}(\gamma) = \mean\left[-\frac{1}{L}\sum_{i=1}^L \log \hat x_\vtheta^{(i, x^{(i)})}(\vz_\gamma, \gamma)\right],
\end{equation}
so $\Ls_f$ in \cref{eq:bregman_obj} becomes the overall CE objective:
\begin{equation}
\label{eq:ce_loss}
  \Ls_{\rm CE}(\vtheta) = \mean_{\gamma \sim \pi} [ \ell_{\rm CE}(\gamma) ].
\end{equation}
Hence, CE is a principled special case of Bregman-divergence FM for categorical diffusion language modeling.
Given predicted token probabilities $\hat x_\vtheta^{(i)}$, we recover a continuous denoiser by taking the embedding expectation of $\hat x_\vtheta^{(i)}$ over the vocabulary:
\begin{equation}
    \hat\vz_\vtheta^{(i)}(\vz_\gamma,\gamma)
    =
    \sum_{k=1}^V \hat x_\vtheta^{(i,k)}(\vz_\gamma,\gamma)\,\ve_k
    =
    \mE^\top \hat\vx_\vtheta^{(i)}(\vz_\gamma,\gamma),
    \label{eq:denoiser-synth}
\end{equation}

This establishes a direct coupling between discrete likelihood training and continuous flow estimation: the model is optimized in token space via CE, while the associated continuous denoiser is obtained deterministically from token probabilities for ODE-based sampling.
\begin{figure}[t]
  \centering
  \includegraphics[width=0.95\linewidth]{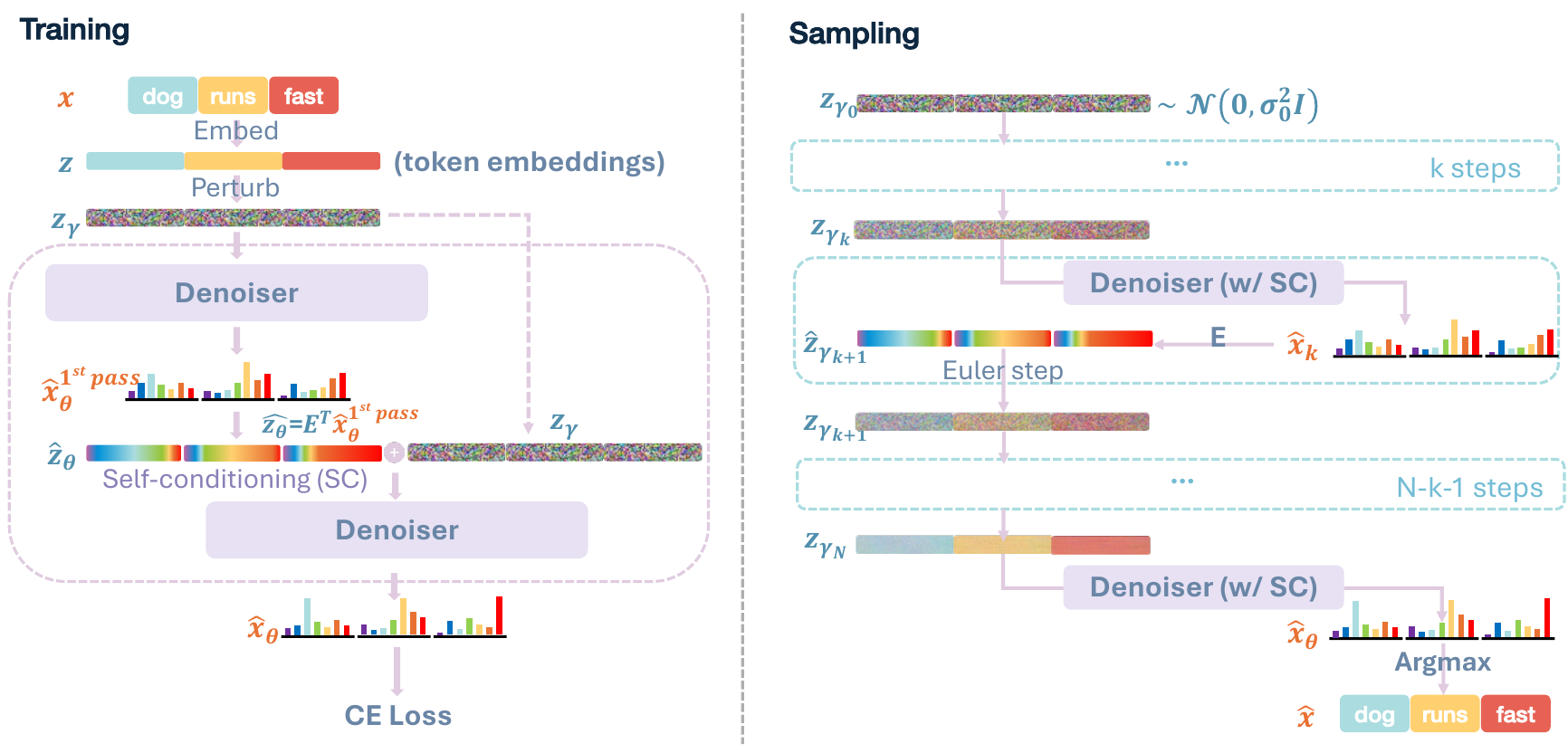}
  \caption{\textbf{LangFlow takes noisy token embeddings as inputs and predicts clean probabilities.} In the \textbf{training (\textit{Left})}, we first map discrete token sequence $\vx$ to token embeddings $\vz$ via a learnable embedding matrix $\mE$.
  Then, we perturb $\vz$ into noisy embeddings $\vz_\gamma$ according to $\gamma$, the sampled logarithmic noise-to-signal ratio.
  Finally, we use a denoiser network to predict categorical distribution $\hat\vx_\vtheta(\vz_\gamma, \gamma)$ and supervise this prediction with the cross-entropy loss.
  Self-conditioning is randomly activated.
  In the \textbf{sampling (\textit{Right})}, we start from Gaussian noise $\vz_{\gamma_0}$ and iteratively denoise it towards clean token embeddings $\vz_{\gamma_N}$.
  We define the moving target $\hat{\vz}_\vtheta$ for every step by embedding predicted clean probabilities $\hat\vx_\vtheta$ with $\mE$.
  The final token sequence is obtained by $\arg\max$ decoding of $\hat\vx_\vtheta(\vz_{\gamma_N}, \gamma_N)$.
}
  \label{fig:mainfig}
\end{figure}

\textbf{Training Pipeline.} Figure~\ref{fig:mainfig} (Left) summarizes our training pipeline.
We first embed language tokens $\vx$ with a learnable embedding matrix $\mE$.
After that, we perturb the clean embeddings $\vz$ to the noisy embeddings $\vz_\gamma$.
Our model is then applied to predict the token probabilities $\hat \vx_\vtheta$ from $\vz_\gamma$.
We use the CE loss in \cref{eq:ce_loss} to train the model.
The complete training procedure of LangFlow is summarized in Algorithm~\ref{alg:training} (deferred to Appendix~\ref{app:algorithm}).

Under this pipeline, our model $\vtheta$ is fed with (noisy) embeddings $\vz_\gamma\in\mathbb{R}^{L\times D}$ and predicts token probabilities $\hat \vx_\vtheta\in\mathbb{R}^{L\times V}$, similar to discrete diffusion and autoregressive language models.
This means that we can share the same network architecture with discrete DLMs~\citep{sahoo2024simple,sahoo2025diffusion}, which is a modified version of Diffusion Transformers~\citep{peebles2023scalable}.

\subsection{ODE Sampling and the Bound for Negative Log Likelihood}
\label{sec:ode_sampling}

Continuous diffusion models typically offer two sampling paradigms: Stochastic Differential Equations (SDEs) or Probability Flow Ordinary Differential Equations (PF-ODEs)~\citep{song2021scorebased}.

\textbf{Why ODE over SDE?}
Although SDEs are common in diffusion frameworks, their continuous noise injection destroys the deterministic bijective mapping between the prior and data distributions.
This stochasticity inherently resists flow-based distillation techniques like Consistency Models~\citep{song2023consistency}.
By exclusively using the deterministic ODE, LangFlow preserves this crucial bijection, enabling future acceleration into efficient few-step generators.

\textbf{Sampling Pipeline.}
Figure~\ref{fig:mainfig} (Right) illustrates the pipeline of our ODE sampling.
In practice, we solve the ODE over a finite $\gamma$ range $[a, b]$ rather than the whole real line across $\mathbb{R}\cup\{\pm\infty\}$.
Given the sampling step $N$, we set the starting point $\gamma_0=b$ and the end point $\gamma_N=a$ and distribute $\gamma_k$ within $[a, b]$ according to a pre-defined discretization (see \cref{sec:noise_scheduler}).
The initial state $\vz_{\gamma_0}$ is sampled from $\mathcal{N}(\bm{0}, \sigma_b^2\mI)$.
At sampling step $k+1$, we obtain the token probabilities $\hat \vx_k$ by running the model forward and embedding $\hat \vx_k$ to derive the denoised embedding $\hat \vz_{k+1}$, which indicates the direction in which our solver should step.
The solver step gets the input $\vz_{k+1}$ of the next sampling step.
After integrating the ODE down to $\gamma_N=a$ by the above iteration, we take a final token prediction $\vx^{(i)}=\arg\max \hat\vx_{\vtheta}^{(i)}(\cdot \mid \vz_a, a)$ with an extra model forward.
In our implementation, we follow existing literature to utilize the most elementary first-order solver along the $\gamma$-path, detailed in \cref{app:exp:solver}.
Our ODE sampling procedure is outlined in Algorithm~\ref{alg:sampling}.

\textbf{Perplexity Estimation.}
Perplexity (PPL) remains the primary metric for evaluating data likelihood in language models.
Previous methods~\citep{gulrajani2023likelihood} compute PPL using stochastic bounds (NELBO).
Based on the continuous Flow Matching formulation, we can naturally extend its NLL estimation to our framework, which integrates along the ODE path.
Specifically, the following theorem gives a novel ODE-based upper bound to estimate perplexity, which accurately evaluates LangFlow with deterministic ODE sampling.

\begin{theorem}
\label{thm:ode_ppl_bound}
For a sequence $\vx = (x^{(1)}, \dots, x^{(L)})$ of length $L$ and embedding dimension $D$, the log-likelihood of LangFlow has the following evidence lower bound:
\begin{equation}
\log p(\vx) \ge \mathbb{E}_{\vz} \Bigg[ \frac{LD}{2} - \frac{\|\vz_b\|^2}{2\sigma_b^2} + \sum_{i=1}^L \log \hat\vx_{\vtheta}^{(i, x^{(i)})}(\vz_a, a) - \int_a^b \frac{\alpha_\gamma}{2} \nabla \cdot \hat{\vz}_\vtheta(\vz_\gamma, \gamma) \dif \gamma \Bigg],
\end{equation}
where $\vz_a \sim \mathcal{N}(\alpha_a \mE\T \vx, \sigma_a^2 \mI)$, $\vz_\gamma (a \leq \gamma \leq b)$ is the reverse ODE trajectory given $\vz_a$, and $\nabla \cdot \hat{\vz}_\vtheta$ represents the divergence w.r.t.\ $\vz_\gamma$.
\end{theorem}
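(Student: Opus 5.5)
The plan is to view LangFlow as a latent-variable model and apply a standard variational (Jensen) bound, with the continuous part of the likelihood computed exactly by the instantaneous change-of-variables formula along the $\gamma$-path ODE. Concretely, the generative model is: draw $\vz_b\sim\N(\bm 0,\sigma_b^2\mI)$; integrate the ODE $\dif\vz_\gamma/\dif\gamma=\vv_\vtheta(\vz_\gamma,\gamma)$ from $\gamma=b$ down to $\gamma=a$, which deterministically gives $\vz_a$ with density $p_a(\vz_a)$; finally decode $p(\vx\mid\vz_a)=\prod_{i}\hat x_\vtheta^{(i,x^{(i)})}(\vz_a,a)$. Then $\log p(\vx)=\log\int p(\vx\mid\vz_a)p_a(\vz_a)\dif\vz_a$. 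I take the variational posterior $q(\vz_a\mid\vx)=\N(\alpha_a\mE\T\vx,\sigma_a^2\mI)$, i.e.\ the forward noising kernel. Jensen's inequality then gives
\begin{equation*}
\log p(\vx)\ge \mean_{q}\big[\log p(\vx\mid\vz_a)+\log p_a(\vz_a)\big]+\mathcal H(q),
\end{equation*}
with $\mathcal H(q)=\frac{LD}{2}\big(1+\log(2\pi\sigma_a^2)\big)$.

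Next I make the velocity field explicit in $\gamma$ using the denoiser parameterization \cref{eq:velocity_denoiser} and \cref{eq:denoiser-synth}. Differentiating $\vz_\gamma=\alpha_\gamma\hat\vz+\sigma_\gamma\vepsilon$ and substituting $\vepsilon=(\vz_\gamma-\alpha_\gamma\hat\vz)/\sigma_\gamma$, I need the logarithmic derivatives of the VP schedule. From $\sigma_\gamma^2=\sigmoid(\gamma)$ one gets $\sigma'_\gamma/\sigma_\gamma=\alpha_\gamma^2/2$ and $\alpha'_\gamma/\alpha_\gamma=-\sigma_\gamma^2/2$. Substituting these yields
\begin{equation*}
\vv_\vtheta(\vz_\gamma,\gamma)=\frac{\alpha_\gamma^2}{2}\vz_\gamma-\frac{\alpha_\gamma}{2}\hat\vz_\vtheta(\vz_\gamma,\gamma).
\end{equation*}
Its divergence is therefore $\frac{LD\,\alpha_\gamma^2}{2}-\frac{\alpha_\gamma}{2}\nabla\cdot\hat\vz_\vtheta$.

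By the instantaneous change of variables, $\frac{\dif}{\dif\gamma}\log p_\gamma(\vz_\gamma)=-\nabla\cdot\vv_\vtheta$ along the trajectory. Hence
\begin{equation*}
\log p_a(\vz_a)=\log p_b(\vz_b)+\int_a^b\nabla\cdot\vv_\vtheta\dif\gamma ,
\end{equation*}
where $\vz_b$ is obtained from $\vz_a$ by running the ODE backward, which is well defined because the flow is a bijection. The key computation is the linear part of the integral. Since $\frac{\dif}{\dif\gamma}\log\sigma_\gamma^2=\alpha_\gamma^2$, it evaluates to $\frac{LD}{2}\log(\sigma_b^2/\sigma_a^2)$. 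For the Gaussian prior, $\log p_b(\vz_b)=-\frac{LD}{2}\log(2\pi\sigma_b^2)-\|\vz_b\|^2/(2\sigma_b^2)$. Summing the prior term, the linear part of the divergence integral, and the entropy, all the $\log(2\pi\sigma^2)$ terms cancel and only the constant $\frac{LD}{2}$ survives. The remaining pieces are $-\|\vz_b\|^2/(2\sigma_b^2)$, the decoder log-likelihood, and $-\int_a^b\frac{\alpha_\gamma}{2}\nabla\cdot\hat\vz_\vtheta\dif\gamma$, which together give exactly the stated bound.

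The main obstacle I expect is bookkeeping of orientation and signs: $\gamma$ decreases toward data, so the sign in the change-of-variables formula must be matched carefully with the integration limits $[a,b]$. The other delicate point is verifying the velocity simplification, since the cancellation of the $\log\sigma^2$ terms hinges on the coefficient of $\vz_\gamma$ being exactly $\alpha_\gamma^2/2$. I would also note the regularity needed for the change of variables: $\hat\vz_\vtheta$ must be Lipschitz in $\vz$ on $[a,b]$, which holds for a smooth network with bounded embeddings.
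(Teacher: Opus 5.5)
Your proposal is correct and follows the paper's proof essentially step by step. Both use the same VAE-style Jensen bound, with the forward kernel $\N(\alpha_a\mE\T\vx,\sigma_a^2\mI)$ as encoder and the ODE-induced $p_a$ as prior, followed by the instantaneous change of variables integrated from $b$ down to $a$ and the simplification $\vv_\vtheta=\frac{\dot\sigma_\gamma}{\sigma_\gamma}\vz_\gamma-\frac{\alpha_\gamma}{2}\hat\vz_\vtheta$, whose linear part cancels against the Gaussian normalizer and encoder entropy to leave $\frac{LD}{2}$ (your explicit $\dot\sigma_\gamma/\sigma_\gamma=\alpha_\gamma^2/2$ just makes the paper's step concrete).
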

\begin{proof}
The proof is deferred to \cref{app:proof_ode_ppl}.
\end{proof}

%% file: sections/method.tex
\section{Improved Design Choices of Continuous Diffusion Language Models} \label{sec:4}
We next identify two key design choices for training continuous diffusion language models like LangFlow: noise schedulers and self-conditioning.
In \cref{sec:noise_scheduler}, we show that language exhibits a fundamentally different optimal noise geometry from the well-known practice of diffusion in the image and video domain. In \cref{sec:self_conditioning}, we show that self-conditioning strengthens continuous diffusion in a way different from discrete diffusion. Putting the two choices together allows LangFlow to match both the perplexity and the sample quality of discrete diffusion.

% Building upon the tokenwise CE objective established in \cref{sec:theory}, \cref{sec:noise_scheduler} discusses the noise scheduling of DLMs, highlighting a fundamental difference between our formulation and standard continuous generative models.
% We propose aligning the noise schedule with the information gain trajectory, which enables LangFlow to achieve performance on par with the SOTA discrete diffusion models (\cref{sec:5}).
% In \cref{sec:self_conditioning}, we identify self-conditioning as an indispensible part of DLMs.
% Finally, in \cref{sec:ode_sampling}, we highlight the deterministic sampling strategy of LangFlow and provide an evaluation protocol compatible with ODE, as opposed to existing DLMs evaluated through stochastic sampling. Additional design choices are discussed in Appendix~\ref{app:design-choices}.

\subsection{Noise Scheduler}
\label{sec:noise_scheduler}
Training continuous diffusion requires choosing how to schedule the noise level. We first follow strong diffusion baselines like Stable Diffusion 3~\citep{esser2024scaling} and EDM~\citep{karras2022elucidating} and use $t$ as our time conditioning with uniform noise scheduling. We plot the curve of our cross-entropy loss w.r.t.\ $t$ at 50k training steps over the entire dataset of LM1B~\citep{chelba2013one} to determine which region of $t$ we should allocate more training steps to.

Figure~\ref{fig:loss_gumbel} (Left) visualizes the result. We observe that our loss is nearly zero for $t\in[0.2,1.0]$. This means that \textbf{1)} in the training, the model can perfectly predict the correct token for $t>0.2$, and \textbf{2)} in the sampling, the model will not introduce new information through $t\in[0.2,1.0]$. Therefore, uniform noise schedulers based on $t$ waste more than half of their training and sampling steps on noise levels that carry no useful information. This is intuitive because the generation paths towards categorical data only have finite and isolated destinations to reach, which makes them distinguishable even at a very noisy level. The optimal noise schedule should therefore depend strongly on the geometry of the underlying data distribution, and we cannot simply migrate the optimal noise scheduling of image diffusion to our scenario.

To optimize the noise scheduler of LangFlow, we propose a series of techniques as follows:

\textbf{Time conditioning on $\gamma$-path.} We notice from Figure~\ref{fig:loss_gumbel} (Left) that the loss reduction mainly takes place in the very noisy region where the signal-to-noise ratio (SNR) is smaller than 0.25. Hence, we introduce our new time conditioning based on $\gamma_t = \log(\sigma_t^2/\alpha_t^2)$ in \cref{sec:theory.bregman-fm}, which is the logarithmic noise-to-signal ratio (NSR). The main advantage is: when NSR scales exponentially, the time conditioning only shifts linearly, which expands the original noisy region $t\in[0,0.2]$ to a broader area. This allows our network to better utilize the resolution of time conditioning and makes the loss profile easier to capture.

\textbf{Information-uniform Principle in Noise Scheduling.}
We observe that the average CE loss $\ell_{\rm CE}(\gamma)$ at noise level $\gamma$ remains stable across training stages (\cref{fig:loss_gumbel}, middle), regardless of the noise scheduler. Using the standard decomposition $\mean_{x \sim p} [-\log q(x)] = \KL(p \,\|\, q) + H(p)$ (where $p$ is the true posterior and $q$ is the model prediction $\hat\vx_\vtheta^{(i)}$), the KL term quickly converges to 0 at the very early stage of training, leaving $\ell_{\rm CE}(\gamma)$ dominated by the irreducible posterior entropy $H_\gamma = \frac{1}{L} \sum_{i=1}^L H(x^{(i)} \mid \vz_\gamma)$. This explains why the loss values remain stable after the early training stage: the main component of the loss is a property of the data and cannot be decreased.
% \notebox{%
%   At each noise level $\gamma$, training loss $\Ls_\gamma$ rapidly converges to mean tokenwise posterior entropy $H_\gamma$.
% }
% \begin{tcolorbox}%[highlightcompact]
% \textbf{Note 1}: At noise level $\gamma$, $\Ls_\gamma$ quickly converges to tokenwise posterior entropy $H_\gamma$ of data.
% \end{tcolorbox}

% The derivative $H'_\gamma$ measures information gained about tokens per unit increase of $\gamma$. Therefore, we should allocate training resources and generative sampling steps to regions where the model acquires the most information (i.e., where $H'_\gamma$ is concentrated).
Diffusion sampling can then be described as a process that gains information, i.e., reduces entropy $H_\gamma$ to zero progressively, where the derivative $H'_\gamma=\frac{\dif H_\gamma}{\dif \gamma}$ measures the rate at which information about the clean tokens is gained per unit change in $\gamma$. We then propose a noise scheduling principle to allocate the information gain uniformly over the noise density, so that each sampling step receives equal information. This suggests that both training efforts and sampling steps should be concentrated in the regions where $H'_\gamma$ is large.

% \notebox{%
%   Match time scheduling to information gain per unit noise level, i.e., $H'_\gamma$.
% }
\begin{tcolorbox}%[highlightcompact]
\textbf{Note 1}: Information-uniform Principle: Match noise density to information gain per unit noise level, i.e., $H'_\gamma$.
%Computation should be allocated non-uniformly along the $\gamma$-path, in proportion to the local information gain $H'_\gamma$.
\end{tcolorbox}

\begin{figure*}[htbp]
  \centering

  \begin{subfigure}[t]{0.32\textwidth}
    \centering
    \includegraphics[width=\linewidth]{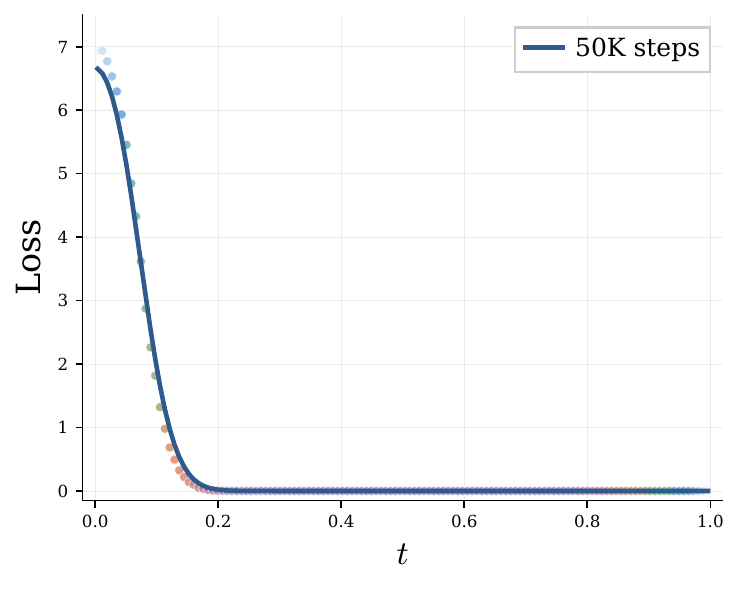}
  \end{subfigure}
  \hfill
  \begin{subfigure}[t]{0.32\textwidth}
    \centering
    \includegraphics[width=\linewidth]{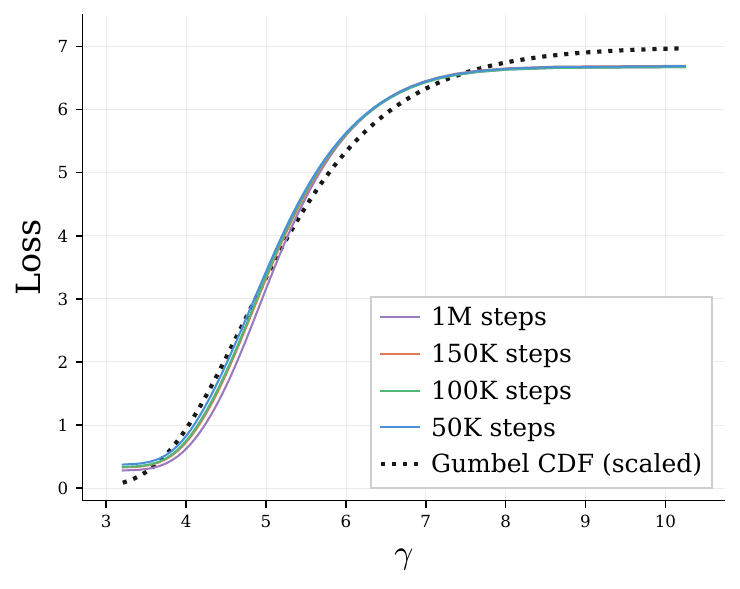}
  \end{subfigure}
  \hfill
  \begin{subfigure}[t]{0.32\textwidth}
    \centering
    \includegraphics[width=\linewidth]{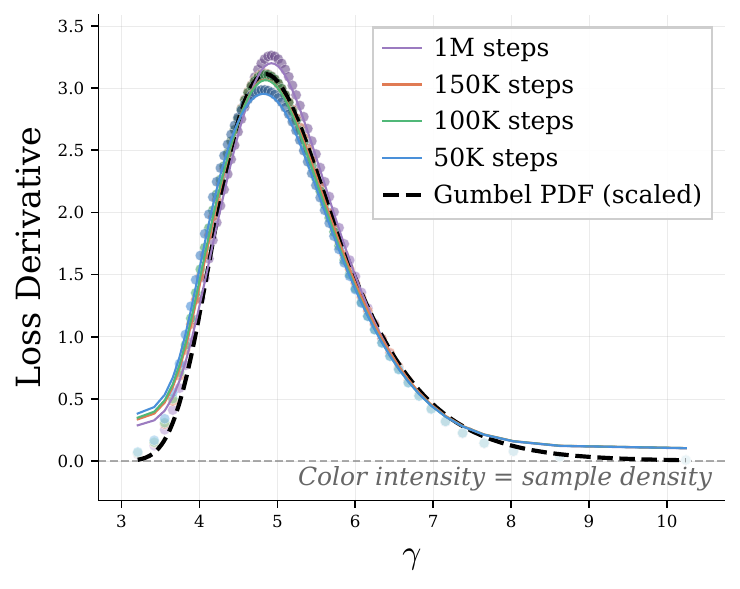}
  \end{subfigure}

  \caption{\textbf{Loss geometry.} (Left) Training loss as a function of the OT flow matching time $t=\sigmoid(\gamma/2)$. (Middle) Loss over $\gamma=\log \mathrm{NSR}$ reveals a consistent geometry across training stages. (Right) The derivative $\partial \mathcal{L}/\partial \gamma$ concentrates in a narrow region, forming a stable structure.}
  \label{fig:loss_gumbel}
\end{figure*}

\paragraph{Scheduling the Noise by the Gumbel Distribution of $\gamma$.}
We follow the above principle to define a noise distribution whose density $\pi(\gamma) \propto H'_\gamma$. We use the empirical mean loss across checkpoints at different training steps as a surrogate for $H_\gamma$ and apply finite-difference smoothing. Surprisingly, the curve of $H'_\gamma$ is positively skewed (\cref{fig:loss_gumbel}, right) and best matched by a Gumbel distribution:
\begin{equation}
  H_\gamma = H_{+\infty} \cdot \exp\left(-\exp\left(-\frac{\gamma - P_\mu}{P_\beta}\right)\right),
\end{equation}

where $H_{+\infty}$ is the average posterior entropy at pure noise and $P_\mu,P_\beta$ are the location and scale parameters of the Gumbel distribution. We observe that the Gumbel distribution drifts subtly as the training evolves. Hence, we make $H_{+\infty}$, $P_\mu$, and $P_\beta$ learnable parameters, trained by a scheduler loss $\Ls_{\rm Scheduler} = \mean[(\ell_{\rm CE}(\gamma) - H_\gamma)^2]$. $\Ls_{\rm Scheduler}$ is used to train the noise scheduler, while $\Ls_{\rm CE}$ is used to train the main model, without interfering with each other.

\textbf{Implementation.} In practice, we clip the range of $\gamma$ to $[a,b]$ where $a, b$ are the lower and upper $10^{-5}$ quantiles of the Gumbel distribution, respectively. During training, we sample $\gamma$ from the Gumbel distribution. During sampling with $N$ steps, we set the intermediate schedule as the $i/N$ quantiles of the Gumbel distribution for $i=1, \dots, N-1$. This choice significantly improves the sample quality of LangFlow, reducing the generative perplexity from $\sim$1000 to 154.2.

\begin{tcolorbox}%[highlightcompact]
\textbf{Note 2}: Optimal noise schedulers greatly improve the sample quality of continuous diffusion.
\end{tcolorbox}
% \notebox{
% Training continuous generative models on disparate data modalities requires scheduling adapted to their inherent information curves.
% }

\subsection{Self-Conditioning}
\label{sec:self_conditioning}
\begin{wraptable}{r}{0.5\textwidth}
\vspace{-6pt}
\centering
\caption{Self-conditioning ablations on LM1B.}
\label{tab:sc}
\small
\setlength{\tabcolsep}{3pt}
\begin{tabular}{lcccc}
\toprule
\textbf{Model} & \textbf{Gen.\,PPL}$\downarrow$ & $\Delta$ & \textbf{PPL}$\downarrow$ & $\Delta$ \\
\midrule
MDLM & 103.9 & -- & 31.0 & -- \\
\textit{+Self-Conditioning} & 94.9 & -9.0 & 32.7 & +1.7 \\
\midrule
LangFlow & 154.2 & -- & 49.0 & -- \\
\textit{+Self-Conditioning} & {81.5} & -72.7 & 30.0 & -19.0 \\
\bottomrule
\end{tabular}
\vspace{-4pt}
\end{wraptable}
Self-conditioning~\citep{chen2022analog} is an established technique in diffusion models, generally known to improve both generation quality and perplexity (Gen.\,PPL and PPL, defined in \cref{sec:5}). Specifically, it feeds the previous step's prediction $\hat\vz_\vtheta(\vz_\gamma, \gamma)$ back as an auxiliary input to the network. During training, self-conditioning is applied randomly with a certain probability, feeding the model its own prediction for one round, so the model learns to exploit the prior prediction when available. During sampling, self-conditioning is always enabled, initialized to zero at the first step and set to the previous prediction at subsequent steps. Figure~\ref{fig:mainfig} demonstrates the position of self-conditioning in training and sampling.

Despite these general benefits, prior works in discrete diffusion language models typically evaluate PPL with self-conditioning disabled. This evaluation protocol has unfortunately been carried over to continuous diffusion; for example, \citet{sahoo2025diffusion} reported the PPL of Plaid~\citep{gulrajani2023likelihood}, a continuous diffusion baseline, without self-conditioning.

We empirically find that applying this evaluation protocol to continuous diffusion is unfair. As shown by the ablation in Table~\ref{tab:sc}, turning self-conditioning on and off for MDLM (discrete) and LangFlow (our continuous) reveals a fundamental asymmetry: For MDLM, self-conditioning improves Gen.\,PPL but actually degrades PPL, which retrospectively justifies its omission in discrete PPL evaluations. For LangFlow, the picture is entirely different: self-conditioning significantly improves both Gen.\,PPL and PPL, which is even enough to close the PPL gap between continuous and discrete diffusion.

How the effects of self-conditioning differ across formulations remains an open question; we provide observations in \cref{app:self-cond}. It is nevertheless clear that self-conditioning is a crucial step for LangFlow to match both the PPL and the sample quality of discrete diffusion.

\begin{tcolorbox}%[highlightcompact]
\textbf{Note 3}: Self-conditioning improves both the PPL and the Gen PPL of continuous diffusion, different from the trade-off mechanism between these two metrics in discrete diffusion.
\end{tcolorbox}

%% file: sections/experiments.tex
\section{Experiments}
\label{sec:5}

\textbf{Datasets.}
We evaluate LangFlow on two standard language modeling benchmarks: LM1B~\citep{chelba2013one} and OpenWebText (OWT) with sequence packing~\citep{raffel2020exploring}, following the existing literature on DLMs.

\begin{wraptable}{r}{0.6\textwidth}
\vspace{-0pt}
\centering
\caption{Performance comparison on LM1B and OWT. We report upper bounds on perplexity (PPL) for DLMs and generative perplexities (Gen.\,PPL) based on GPT-2~\citep{radford2019language}. $^{\ddagger}$ notes results of retrained baselines. We use \colorbox{gold!60}{gold}, \colorbox{silver!60}{silver}, and \colorbox{bronze!60}{bronze} to note the 1st, 2nd, and 3rd place among DLMs, respectively.}
\label{tab:diffusion_lm_comparison}
\small
\setlength{\tabcolsep}{3pt}
\begin{tabular}{lcccc}
\toprule
\textbf{Model} & \multicolumn{2}{c}{\textbf{LM1B}} & \multicolumn{2}{c}{\textbf{OWT}} \\
 & \textbf{Gen.\,PPL}$\downarrow$ & \textbf{PPL}$\downarrow$ & \textbf{Gen.\,PPL}$\downarrow$ & \textbf{PPL}$\downarrow$ \\
\midrule
\multicolumn{5}{l}{\textit{Autoregressive}} \\
Transformer & 66.7$^{\ddagger}$ & 22.8$^{\ddagger}$ & 35.9 & 17.5 \\
\midrule
\multicolumn{5}{l}{\textit{Diffusion (Absorbing-state)}} \\
D3PM Absorb & -- & 76.9 & -- & -- \\
DiffusionBert & -- & 63.8 & -- & -- \\
SEDD Absorb & 115.9$^{\ddagger}$ & \cellcolor{bronze!60}32.0$^{\ddagger}$ & -- & \cellcolor{silver!60}24.1 \\
MDLM & 103.9$^{\ddagger}$ & \cellcolor{silver!60}31.0$^{\ddagger}$ & 104.9 & \cellcolor{gold!60}23.2 \\
\midrule
\multicolumn{5}{l}{\textit{Diffusion (Uniform-state / Continuous)}} \\
D3PM Uniform & -- & 137.9 & -- & -- \\
Diffusion-LM & -- & 118.6 & -- & -- \\
Plaid & \cellcolor{gold!60}77.3 & 32.4 & -- & -- \\
SEDD Uniform & -- & 40.3 & 103.6 & 29.7 \\
UDLM & 99.8$^{\ddagger}$ & 33.8$^{\ddagger}$ & -- & 27.4 \\
Duo & 97.6$^{\ddagger}$ & 33.6$^{\ddagger}$ & \cellcolor{bronze!60}77.6 & 25.2 \\
FLM (1024 steps) & \cellcolor{bronze!60} 96.9 & -- & \cellcolor{silver!60}62.2 & -- \\
\midrule
\textbf{LangFlow (Ours)} & \cellcolor{silver!60}{\textbf{92.2}} & \cellcolor{gold!60}{\textbf{30.0}} & \cellcolor{gold!60}{\textbf{36.5}} & \cellcolor{bronze!60}{\textbf{24.6}} \\
\bottomrule
\end{tabular}
\vspace{-6pt}
\end{wraptable}

\textbf{Training.} Following discrete diffusion baselines~\citep{sahoo2024simple,schiff2024simple,sahoo2025diffusion}, we use a DiT-style Transformer~\citep{peebles2023scalable} with rotary positional encoding~\citep{su2024roformer}. The model consists of 12 layers, a hidden size of 768, and 12 attention heads (130M parameters). The model is conditioned on $\gamma$ rather than $t$, and uses a learned Gumbel scheduler to match the empirical information-gain profile (Sec.~\ref{sec:noise_scheduler}). During training, $\gamma$ is sampled from the learned distribution and the model is optimized with token-level cross-entropy together with the schedule fitting loss. Self-conditioning is enabled with probability 0.25. We also use a preconditioning skip connection with a 5K-step warmup. Aligned with existing literature~\citep{sahoo2024simple,sahoo2025diffusion}, we train models for 1M steps on LM1B and OWT with a batch size of 512, and use a context length of 128 and the \texttt{bert-base-uncased} tokenizer~\citep{devlin2019bert} for LM1B and a context length of 1024 and the \texttt{gpt2-large} tokenizer~\citep{radford2019language} for OWT.

\textbf{Evaluation.} We evaluate the model by the following metrics: generative perplexity (Gen.\,PPL) and perplexity on the validation dataset (PPL). For Gen.\,PPL, we follow the protocol in existing works~\citep{sahoo2024simple,schiff2024simple} to generate 1024 samples and compute their mean perplexity measured by GPT2-Large~\citep{radford2019language}. For all baselines and ours, samples are generated using 128 sampling steps and a sequence length of 128 on LM1B, and 1024 sampling steps with a sequence length of 1024 on OWT. The only exception is FLM, where the numbers are obtained from its original paper based on 1024 sampling steps. For PPL, we report the upper bound for all DLM baselines, while we use the derivation in \cref{thm:ode_ppl_bound} to provide the PPL upper bound for ours.

\textbf{Baselines.}
We compare LangFlow against the autoregressive Transformer and a wide range of discrete and continuous DLMs, including D3PM~\citep{austin2021structured}, DiffusionBert~\citep{he2023diffusionbert}, SEDD~\citep{lou2023discrete}, MDLM~\citep{sahoo2024simple}, UDLM~\citep{schiff2024simple}, and Duo~\citep{sahoo2025diffusion}. 
We also include continuous baselines such as Diffusion-LM~\citep{li2022diffusion} and Plaid~\citep{gulrajani2023likelihood}. Due to the lack of checkpoints on LM1B, we retrained AR, SEDD, MDLM, UDLM, Duo, and Plaid on LM1B for 1M steps, because these baselines achieve state-of-the-art performance in recent literature. For OWT, we use checkpoints provided by the codebase of Duo. We retrained baselines with their original network architecture, which is the same as ours, except for Plaid~\citep{gulrajani2023likelihood}, for which we use its own codebase and Transformer architecture for training. This is because Plaid performs worse when using the DiT architecture used by us and other baselines. To make the comparison fair, we use the same number of Transformer layers, heads, and hidden dimension as ours, making its number of parameters close to ours. Detailed implementation of Plaid is deferred to \cref{app:plaid}. For DiffusionBert, Diffusion-LM, and D3PM, we report the numbers from the original paper of Duo~\citep{sahoo2025diffusion}. Specifically, we include FLM~\citep{lee2026one}, a concurrent continuous DLM, in our baselines. FLM does not support PPL evaluation so we only report the Gen.\,PPL with the same (OWT) or more (LM1B) sampling steps as in its original paper.

\textbf{Language Modeling.} We first compare the PPL of our models trained on LM1B and OWT over the corresponding validation sets to those of baselines. As shown in Table~\ref{tab:diffusion_lm_comparison}, LangFlow achieves the best PPL on LM1B and the third best PPL on OWT, matching the performance of the state-of-the-art discrete DLMs. Our Gen.\,PPL ranks second on LM1B and first on OWT.

\begin{table*}[t]
\centering
\caption{Zero-shot perplexities ($\downarrow$) of models trained for 1M steps on OWT. All perplexities for DLMs are upper bounds. $\dagger$ Taken from \citet{sahoo2024simple}. $\P$ Taken from \citet{lou2023discrete}, where these baselines were trained for 1.3M steps as opposed to ours and other baselines that were trained for 1M steps. $^{\ddagger}$ Taken from \citet{sahoo2025diffusion}. We use \colorbox{gold!60}{gold}, \colorbox{silver!60}{silver}, and \colorbox{bronze!60}{bronze} to note the 1st, 2nd, and 3rd place among DLMs, respectively.}
\label{tab:zeroshot_ppl}
\small
\setlength{\tabcolsep}{7.5pt}
\begin{tabular}{lccccccc}
\toprule
\textbf{Model} & \textbf{PTB} & \textbf{Wikitext} & \textbf{LM1B} & \textbf{Lambada} & \textbf{AG News} & \textbf{Pubmed} & \textbf{Arxiv} \\
\midrule
\multicolumn{8}{l}{\textit{Autoregressive}} \\
Transformer$^{\dagger}$ & 82.05 & 25.75 & 51.25 & 51.28 & 52.09 & 49.01 & 41.73 \\
\midrule
\multicolumn{8}{l}{\textit{Diffusion (Absorbing-state)}} \\
SEDD Absorb$^{\dagger}$ & 100.09 & 34.28 & \cellcolor{silver!60}68.20 & 49.86 & \cellcolor{silver!60}62.09 & \cellcolor{bronze!60}44.53 & \cellcolor{bronze!60}38.48 \\
D3PM Absorb$^{\P}$ & 200.82 & 50.86 & 138.92 & 93.47 & -- & -- & -- \\
MDLM$^{\dagger}$ & \cellcolor{bronze!60}95.26 & \cellcolor{silver!60}32.83 & \cellcolor{gold!60}67.01 & \cellcolor{silver!60}47.52 & \cellcolor{gold!60}61.15 & \cellcolor{gold!60}41.89 & \cellcolor{gold!60}37.37 \\
\midrule
\multicolumn{8}{l}{\textit{Diffusion (Uniform-state / Continuous)}} \\
SEDD Uniform$^{\ddagger}$ & 105.51 & 41.10 & 82.62 & 57.29 & 82.64 & 55.89 & 50.86 \\
Plaid$^{\P}$ & 142.60 & 50.86 & 91.12 & 57.28 & -- & -- & -- \\
UDLM$^{\ddagger}$ & 112.82 & 39.42 & 77.59 & 53.57 & 80.96 & 50.98 & 44.08 \\
Duo$^{\ddagger}$ & \cellcolor{silver!60}89.35 & \cellcolor{bronze!60}33.57 & 73.86 & \cellcolor{bronze!60}49.78 & \cellcolor{bronze!60}67.81 & \cellcolor{silver!60}44.48 & 40.39 \\
\midrule
\textbf{LangFlow (Ours)} & \cellcolor{gold!60}\textbf{81.20} & \cellcolor{gold!60}\textbf{32.28} & \cellcolor{bronze!60}\textbf{68.21} & \cellcolor{gold!60}\textbf{46.93} & \textbf{69.41} & \textbf{46.74} & \cellcolor{silver!60}\textbf{38.47} \\
\bottomrule
\end{tabular}
\vspace{-0.4cm}
\end{table*}

\textbf{Zero-Shot Transfer.} We evaluate zero-shot transfer following the protocol of prior works~\citep{radford2019language,lou2023discrete,sahoo2024simple,sahoo2025diffusion}, where models trained on OWT are evaluated on a diverse set of downstream corpora, including PTB, Wikitext, LM1B, Lambada, AG News, PubMed, and Arxiv. As shown in Table~\ref{tab:zeroshot_ppl}, LangFlow achieves strong zero-shot performance among DLMs, ranking first on PTB, Wikitext, and Lambada, while remaining competitive across all other domains. While MDLM attains more first-place results overall, LangFlow outperforms the autoregressive Transformer on 4 out of 7 benchmarks and MDLM on 3 out of 7 benchmarks.

% \begin{wrapfigure}{l}{0.6\linewidth}
% \centering
% \includegraphics[width=\linewidth]{figs/entropy_zscore_langflow.pdf}
% \caption{Entropy deviation of generated samples reported as $z$-scores relative to the LM1B training distribution. 
% Red dashed lines indicate the $\pm3\sigma$ interval. Plaid, UDLM, and Duo fall beyond $-3\sigma$, indicating abnormally low entropy.}
% \label{fig:entropy}
% \vspace{-0.3cm}
% \end{wrapfigure}

% \textbf{Sample entropy \& Plaid} Sample entropy of AR, discrete diffusion, and Emb-DLM all locate within the $3\sigma$ range around the entropy of the training data. The only exception is Plaid, which corresponds to our MSE objective. Our results show that Plaid achieves surprisingly good Gen.\,PPL of 77.3 and reduces the state-of-the-art PPL of continuous diffusion by $4\times$. This suggests that Plaid has even stronger performance in language modeling than discrete diffusion. However, Section~\ref{sec:4.1} shows that the MSE objective of Plaid leads to mode collapse in the embedding layer. As a result, the sample entropy of Plaid suffers a clear drop. Considering that this entropy drop may cause unpredictable issues in a scaled-up model, we exclude Plaid in our comparison.

\label{sec:5.2}

%% file: sections/related.tex
\section{Additional Related Work}
\textbf{Discrete Diffusion Language Models.}
Discrete diffusion operates on categorical data in a discrete space by jumping between discrete states~\citep{austin2021structured,campbell2022continuous,he2023diffusionbert,lou2023discrete,sahoo2024simple,schiff2024simple,ou2024your,sahoo2025diffusion,sahoo2025esoteric}. As the mainstream of diffusion language models, discrete diffusion supports scalable language modeling~\citep{gong2024scaling,nie2025large,ye2025dream} with parallel decoding~\citep{cheng2025sdar}. With semi-autoregressive techniques like block diffusion~\citep{arriola2025block}, discrete diffusion morphs into an alternative to AR models with substantially higher parallelism.

\textbf{Continuous Diffusion Language Models.} 
Inspired by the success of continuous diffusion in images and videos, efforts have been made to extend continuous diffusion to language. One approach is to construct diffusion on the simplex of token probabilities~\citep{cheng2024categorical,song2025shortlisting,cheng2025alpha,jo2025continuous}, which struggles to learn the score function at scale with sparse signals due to the curse of dimensionality. Another group of methods diffuse embeddings, and our method falls into this family. 

Most existing embedding-space DLMs proposed their training objectives heuristically, making rigorous evaluation of their negative log-likelihood (NLL) difficult~\citep{li2022diffusion,dieleman2022continuous}, which is the mainstream evaluation metric in language modeling. Plaid~\citep{gulrajani2023likelihood} provides a tractable SDE-based upper bound for NLL, but its objective is cumbersome to optimize, requiring dynamically sliced batches to balance different loss terms. Also, Plaid does not yield comparable performance to discrete counterparts at the scale of OpenWebText. Recent advances in embedding-space diffusion indicate its potential in few-step generation and achieve sample quality on par with discrete diffusion~\citep{lee2026one,shen2026codar}, but these methods still lack proper ODE-based evaluation bounds for NLL. In contrast to the literature, we connect the cross-entropy objective to Bregman divergence Flow Matching, derive a novel ODE-based upper bound of NLL, and improve the design choices accordingly.

Prior to our information-uniform principle, a heuristic trick for noise scheduling was importance sampling~\citep{nichol2021improved}, which was first introduced to embedding-space DLMs by \citet{gong2022diffuseq}. Importance sampling allocates the training budget according to the loss distribution over different noise levels. By contrast, our noise scheduling principle allocates the training budget according to the distribution of loss derivatives. While our principle is theoretically grounded, importance sampling also allocates more training budget to noisy regions, which roughly resembles our approach empirically. That explains why it yields considerable improvements compared to uniform scheduling.

\textbf{Variational Flow Matching.} Recent work on Variational Flow Matching (VFM)~\citep{eijkelboom2024variational} introduces a variational view of Flow Matching through auxiliary posteriors. Exponential-Family VFM (EF-VFM)~\citep{guzman2025exponential} further expands such variational objectives to the whole exponential family via Bregman-divergence-based moment matching. In contrast, LangFlow is derived directly from posterior matching under the objective of minimizing the Bregman divergence in the token space. While the variational machinery is therefore not required in our case, we note that a compatible variational interpretation of LangFlow can nevertheless be constructed, which we provide in Appendix~\ref{app:vfm}.

%% file: sections/conclusion.tex
\section{Conclusion}
We present LangFlow, a principled framework for embedding-space diffusion language modeling grounded in Flow Matching via Bregman divergence. Our formulation provides a theoretically grounded cross-entropy objective, an ODE-based NLL bound for likelihood evaluation, and a language-specific design space characterized by information-uniform noise scheduling and effective self-conditioning. Together, these components establish a coherent foundation for continuous diffusion in language modeling.

Empirically, LangFlow achieves strong performance on large-scale benchmarks, reaching a perplexity of 30.0 on LM1B and 24.6 on OpenWebText. It surpasses uniform-state discrete diffusion and matches state-of-the-art masked diffusion at the same model and data scale, and demonstrates competitive zero-shot transfer that exceeds autoregressive baselines on multiple benchmarks. These results provide the first consolidated evidence that continuous diffusion can fully realize its advantages in language modeling. To summarize, LangFlow re-establishes baselines for continuous diffusion language modeling, enabling future extensions to be built on a stronger foundation comparable to discrete diffusion.

% We present LangFlow, a continuous diffusion language model that operates in the token embedding space. By constructing a novel NLL bound and applying optimal training techniques, LangFlow becomes the first continuous diffusion to match the state-of-the-art discrete diffusion in PPL and Gen. PPL, for the first time indicating the comparable potential of continuous diffusion language models.

\textbf{Limitations \& Future Work.}
Although LangFlow achieves strong perplexity and generative perplexity, its sample entropy remains lower than that of certain discrete baselines, consistent with observations in other continuous diffusion approaches. In our quantitative and qualitative evaluations (see \cref{app:exp:eval}), we do not observe any noticeable degradation in sample quality attributable to this difference. However, reduced entropy may have subtle effects that could emerge at larger scales, which fall beyond the scope of this work. We leave a further investigation of these effects to future research.

% \textbf{Limitations} Although LangFlow achieves both PPL and Gen.\,PPL which match or outperform those of state-of-the-art discrete diffusion, we find out that its sample entropy is generally lower than other baselines, as shown in Table~\ref{tab:ppl_entropy}. However, we find that increasing the rate of self conditioning from 0.25 to 0.50 during training could increase the sample entropy on LM1B, at a cost of higher PPL. Hence, this issue should be solved by tuning the self-conditioning rate for a sweet point to trade off PPL and entropy, which we leave as future work.  

% \begin{wraptable}{r}{0.55\textwidth}
% \vspace{-10pt}
% \centering
% \caption{PPL and entropy on LM1B and OWT.}
% \label{tab:ppl_entropy}
% \small
% \setlength{\tabcolsep}{3pt}
% \begin{tabular}{lcccc}
% \toprule
% \textbf{Model} & \multicolumn{2}{c}{\textbf{LM1B}} & \multicolumn{2}{c}{\textbf{OWT}} \\
%  & \textbf{PPL}$\downarrow$ & \textbf{Entropy}$\downarrow$ & \textbf{PPL}$\downarrow$ & \textbf{Entropy}$\downarrow$ \\
% \midrule
% MDLM & 31.0 & 4.32 & 23.1 & 5.62 \\
% Duo & 33.6 & 4.31 & 25.2 & 5.55 \\
% \midrule
% LangFlow (SC 0.25) & 30.0 & 4.30 & 24.6 & 5.25 \\
% LangFlow (SC 0.50) & 31.7 & 4.33 & - & - \\
% \bottomrule
% \end{tabular}
% \vspace{-10pt}
% \end{wraptable}

%% file: appendix/algo.tex
\section{LangFlow}
\subsection{Algorithms}
\label{app:algorithm}

We summarize the complete training and sampling procedures of LangFlow in Algorithm~\ref{alg:training} and Algorithm~\ref{alg:sampling}, respectively.

\begin{figure*}[ht]
\begin{minipage}[t]{\textwidth}
\begin{algorithm}[H]
    \caption{Training}
    \label{alg:training}
    \begin{algorithmic}[1]
        \REPEAT
        \STATE $\vx \sim p_{\rm data}, \vz = (\ve_{x^{(1)}}, \dots, \ve_{x^{(L)}})$
        \STATE $q = \operatorname{clip}(\text{Uniform}(0, 1), 10^{-5}, 1-10^{-5})$ using low-discrepancy sampler
        \STATE $\gamma = \operatorname{stopgrad}(P_\mu - P_\beta\log(-\log q))$
        \STATE $\alpha_\gamma = \sqrt{\text{sigmoid}(-\gamma)}, \sigma_\gamma = \sqrt{\text{sigmoid}(\gamma)}$
        \STATE $\vz_\gamma \sim \mathcal{N}(\alpha_\gamma\vz, \sigma_\gamma^2\mI)$
        \IF{$\text{Bernoulli}(p_{\rm SC})$}
            \STATE $\hat\vx = \hat\vx_{\vtheta}(\vz_\gamma, \gamma)$, self-conditioning: $\bm 0$
            \STATE $\hat{\vz}^{(i)} = \mE\T \hat\vx^{(i)}$ for each $i$
            \STATE Stop gradient on $\hat\vz$
        \ELSE
            \STATE $\hat\vz = \bm 0$
        \ENDIF
        \STATE $\hat\vx = \hat\vx_{\vtheta}(\vz_\gamma, \gamma)$, self-conditioning: $\hat\vz$
        \STATE $\mathcal{L}_{\rm CE} = -\frac{1}{L}\sum_{i=1}^L \log\hat x^{(i, x^{(i)})}$
        \STATE $H_\gamma = H_{+\infty} \cdot \exp(-\exp(-(\gamma-P_\mu)/P_\beta))$
        \STATE $\mathcal{L}_{\rm Scheduler} = (\operatorname{stopgrad}(\mathcal{L}_{\rm CE})-H_\gamma)^2$
        \STATE Take an optimizer step on $\mathcal{L}_{\rm CE}+\mathcal{L}_{\rm Scheduler}$
        \UNTIL{converged}
    \end{algorithmic}
\end{algorithm}
\end{minipage}

\begin{minipage}[t]{\textwidth}
\begin{algorithm}[H]
    \caption{Euler Sampling}
    \label{alg:sampling}
    \begin{algorithmic}[1]
        \FOR{$k=0$ \TO $N$}
            \STATE $q=\operatorname{clip}(1-k/N, 10^{-5}, 1-10^{-5})$
            \STATE $\gamma_k=P_\mu - P_\beta\log(-\log q)$
            \STATE $\alpha_k=\sqrt{\operatorname{sigmoid}(-\gamma_k)}$
            \STATE $\sigma_k=\sqrt{\operatorname{sigmoid}(\gamma_k)}$
        \ENDFOR
        \STATE $\vz_0 \sim \mathcal{N}(\bm{0}, \sigma_0^2\mI)$
        \STATE $\hat\vz=\bm 0$
        \FOR{$k=0$ \TO $N-1$}
            \STATE $\hat\vx = \hat\vx_{\vtheta}(\vz_{k}, \gamma_k)$, self-conditioning: $\hat\vz$
            \STATE Update $\hat{\vz}^{(i)} = \mE\T \hat\vx^{(i)}$ for each $i$
            \STATE $\vz_{k+1} = \sigma_{k+1} \left( \frac{\vz_k}{\sigma_k} + \left( \frac{\alpha_{k+1}}{\sigma_{k+1}} - \frac{\alpha_{k}}{\sigma_{k}} \right)\hat{\vz} \right)$
        \ENDFOR
        \STATE $x^{(i)} = \arg\max \hat\vx_{\vtheta}^{(i)}(\vz_N, \gamma_N)$
        \RETURN $\vx$.
    \end{algorithmic}
\end{algorithm}
\end{minipage}
\end{figure*}

\subsection{Numerical Solver}
\label{app:exp:solver}
Generative modeling ODEs frequently rely on the assumption of specific invariants over each discrete integration step. In alignment with our sampling design, we use an Euler solver under the premise that the denoised embedding $\hat\vz_\vtheta(\vz_\gamma, \gamma)$ remains constant within each small interval. To facilitate this discrete integration over appropriate schedule parameters, we perform the following parameterization:
\begin{align}
\frac{\dif\vz_\gamma}{\dif\gamma} = \frac{\dot\sigma_\gamma}{\sigma_\gamma}\vz_\gamma + \frac{\dot\alpha_\gamma\sigma_\gamma-\alpha_\gamma\dot\sigma_\gamma}{\sigma_\gamma}\hat\vz_\vtheta \implies \dif\left(\frac{\vz_\gamma}{\sigma_\gamma}\right) = \hat\vz_\vtheta \dif\left(\frac{\alpha_\gamma}{\sigma_\gamma}\right).
\end{align}

\subsection{Variational View of LangFlow}
\label{app:vfm}

In the main text, LangFlow is derived directly from a Bregman-divergence objective in token space. Nevertheless, LangFlow also admits a variational interpretation closely related to Variational Flow Matching (VFM)~\citep{eijkelboom2024variational}. In this appendix, we make this connection explicit.

Recall that the clean sequence is $\vx=(x^{(1)},\dots,x^{(L)})$, and the corresponding clean embedding sequence is $\vz = (\ve_{x^{(1)}}, \dots, \ve_{x^{(L)}}) \in \mathbb{R}^{L\times D}$,
where $\mE \in \mathbb{R}^{V\times D}$ is the token embedding matrix. Along the $\gamma$-path, the noisy latent state is
\begin{equation}
    \vz_\gamma = \alpha_\gamma \vz + \sigma_\gamma \vepsilon,\qquad \vepsilon \sim \mathcal{N}(\bm 0, \mI).
\end{equation}
This induces a true conditional distribution over clean token sequences given the noisy state, namely the endpoint posterior $p(\vx \mid \vz_\gamma)$.

We define a variational family over token sequences conditioned on $\vz_\gamma$ by
\begin{equation}
q_\vtheta(\vx \mid \vz_\gamma,\gamma)
:= \prod_{i=1}^L q_\vtheta^{(i)}(x^{(i)} \mid \vz_\gamma,\gamma),
\label{eq:var_factorized_q}
\end{equation}
where each factor is parameterized by the model prediction
\begin{equation}
q_\vtheta^{(i)}(x^{(i)}=k \mid \vz_\gamma,\gamma)
:= \hat x_\vtheta^{(i,k)}(\vz_\gamma,\gamma).
\label{eq:var_factorized_q_token}
\end{equation}

Consider the negative conditional log-likelihood of this variational family:
\begin{align}
\Ls_{\rm VFM}(\vtheta)
&:= \mathbb{E}_{\gamma\sim\pi, \vz_\gamma}
\Bigl[
-\frac{1}{L}\log q_\vtheta(\vx \mid \vz_\gamma,\gamma)
\Bigr].
\label{eq:var_langflow_obj_joint}
\end{align}
Using the factorization in \cref{eq:var_factorized_q},
\begin{align}
-\log q_\vtheta(\vx \mid \vz_\gamma,\gamma)
&=
-\sum_{i=1}^L \log q_\vtheta^{(i)}(x^{(i)} \mid \vz_\gamma,\gamma) \\
&=
-\sum_{i=1}^L \log \hat x_\vtheta^{(i,x^{(i)})}(\vz_\gamma,\gamma).
\end{align}
Substituting this into \cref{eq:var_langflow_obj_joint}, we obtain
\begin{equation}
\Ls_{\rm VFM}(\vtheta)
=
\mathbb{E}_{\gamma\sim\pi, \vz_\gamma}
\left[
-\frac{1}{L}\sum_{i=1}^L
\log \hat x_\vtheta^{(i,x^{(i)})}(\vz_\gamma,\gamma)
\right]
=
\Ls_{\rm CE}(\vtheta).
\label{eq:var_equals_ce}
\end{equation}
Therefore, the LangFlow training objective is exactly the negative log-likelihood of a factorized variational approximation to the endpoint posterior.

%% file: appendix/proof.tex
\section{Proofs}

\subsection{\cref{thm:ode_ppl_bound}}
\label{app:proof_ode_ppl}

We start with a VAE-style evidence lower bound (ELBO) viewing the least noisy continuous state $\vz_a$ as a latent variable for the discrete data $\vx$, where the prior distribution $p_a(\vz_a)$ is defined by the ODE, $p(\vz_a|\vx)=\N(\alpha_a \vz, \sigma_a^2\mI)$ serves as the encoder, and $\hat p(\vx \mid \vz_a) = \prod_{i=1}^L \hat\vx_\vtheta^{(i, x^{(i)})}(\vz_a, a)$ is the decoder:
\begin{align}
\log p(\vx) &= \log \mean_{p(\vz_a \mid \vx)} \left[ \frac{p_a(\vz_a) \hat p(\vx \mid \vz_a)}{p_a(\vz_a \mid \vx)} \right] \\
& \ge \mean_{p(\vz_a \mid \vx)} \left[ \log\frac{p_a(\vz_a) \hat p(\vx \mid \vz_a)}{p_a(\vz_a \mid \vx)} \right] \\
& = \mean_{p(\vz_a \mid \vx)} \left[ \log p_a(\vz_a) + \log \hat p(\vx \mid \vz_a) \right] + \frac{LD}{2}\log(2\pi\e\sigma_a^2),
\end{align}
where we utilize the entropy of the Gaussian distribution to compute $\mean_{p(\vz_a \mid \vx)}[-\log p_a(\vz_a \mid \vx)]$ in the second equality.

To compute $\log p_a(\vz_a)$, we utilize the Instantaneous Change of Variables~\citep{lipman2023flow}:
\begin{equation}
    \frac{\dif}{\dif \gamma} \log p_\gamma(\vz_\gamma) = -\nabla_{\vz_\gamma} \cdot \vv_\vtheta(\vz_\gamma, \gamma).
\end{equation}
Integrating both sides gives:
\begin{equation}
    \log p_a(\vz_a) = \log p_b(\vz_b) + \int_a^b \nabla_{\vz_\gamma} \cdot \vv_\vtheta(\vz_\gamma, \gamma) \dif \gamma.
\end{equation}
Note that $\log p_b(\vz_b) = -LD\log(\sqrt{2\pi}\sigma_b)-\|\vz_b\|^2/2\sigma_b^2$. Therefore,
\begin{align}
    \log p(\vx) & \geq \mean_{p(\vz_a \mid \vx)} \left[ \log p_a(\vz_a) + \log \hat p(\vx \mid \vz_a) \right] + \frac{LD}{2}\log(2\pi\e\sigma_a^2) \\
    & = \mean_{p(\vz_a \mid \vx)} \left[ -\frac{\|\vz_b\|^2}{2\sigma_b^2} + \int_a^b \nabla_{\vz_\gamma} \cdot \vv_\vtheta(\vz_\gamma, \gamma) \dif\gamma + \log \hat p(\vx \mid \vz_a) \right] + \frac{LD}{2}\log\frac{\e\sigma_a^2}{\sigma_b^2}.
\end{align}

The velocity-denoiser relationship given by \cref{eq:velocity_denoiser} is equivalent to
\begin{align}
\vv_\vtheta(\vz_\gamma, \gamma) &= \frac{\dot\sigma_\gamma}{\sigma_\gamma}\vz_\gamma + \frac{\dot\alpha_\gamma\sigma_\gamma-\alpha_\gamma\dot\sigma_\gamma}{\sigma_\gamma}\hat\vz_\vtheta(\vz_\gamma, \gamma) \\
&= \frac{\dot\sigma_\gamma}{\sigma_\gamma}\vz_\gamma - \frac{\alpha_\gamma}{2}\hat\vz_\vtheta(\vz_\gamma, \gamma),
\end{align}
with the second equality following from the fact that $\gamma=\log(\sigma_\gamma^2/\alpha_\gamma^2)$. Therefore, the divergence term can be computed as:
\begin{align}
\nabla \cdot \vv_\vtheta(\vz_\gamma, \gamma) &= LD\frac{\dot\sigma_\gamma}{\sigma_\gamma} - \frac{\alpha_\gamma}{2}\nabla \cdot \hat\vz_\vtheta(\vz_\gamma, \gamma), \\
\int_a^b \nabla \cdot \vv_\vtheta(\vz_\gamma, \gamma) \dif \gamma &= LD\log\frac{\sigma_b}{\sigma_a} - \int_a^b \frac{\alpha_\gamma}{2} \nabla \cdot \hat\vz_\vtheta(\vz_\gamma, \gamma) \dif \gamma.
\end{align}
Therefore,
\begin{align}
    \log p(\vx) & \geq \mean_{p(\vz_a \mid \vx)} \left[ -\frac{\|\vz_b\|^2}{2\sigma_b^2} + \int_a^b \nabla_{\vz_\gamma} \cdot \vv_\vtheta(\vz_\gamma, \gamma) \dif\gamma + \log \hat p(\vx \mid \vz_a) \right] + \frac{LD}{2}\log\frac{\e\sigma_a^2}{\sigma_b^2} \\
    & = \mean_{p(\vz_a \mid \vx)} \left[ -\frac{\|\vz_b\|^2}{2\sigma_b^2} - \int_a^b \frac{\alpha_\gamma}{2} \nabla \cdot \hat\vz_\vtheta(\vz_\gamma, \gamma) \dif \gamma + \log \hat p(\vx \mid \vz_a) \right] + \frac{LD}{2} \\
    & = \mean_{p(\vz_a \mid \vx)} \left[ -\frac{\|\vz_b\|^2}{2\sigma_b^2} - \int_a^b \frac{\alpha_\gamma}{2} \nabla \cdot \hat\vz_\vtheta(\vz_\gamma, \gamma) \dif \gamma + \sum_{i=1}^L \log \hat\vx_\vtheta^{(i, x^{(i)})}(\vz_a, a) \right] + \frac{LD}{2},
\end{align}
which concludes the proof.

%% file: appendix/design.tex
\section{Experimental Details}
\label{app:design-choices}
\subsection{Model architecture}
\label{app:exp:model}
We train AR, SEDD, MDLM, and Duo with the Duo codebase~\footnote{\url{https://github.com/s-sahoo/duo}}, UDLM with its codebase~\footnote{\url{https://github.com/kuleshov-group/discrete-diffusion-guidance}}, and Plaid with its codebase~\footnote{\url{https://github.com/igul222/plaid}}. All retrained baselines are trained with their own default setups. The network in AR, SEDD, MDLM, UDLM, and Duo is the same 130M modified DiT architecture, with 12 attention layers, 12 heads, a hidden dimension of 768, and a time embedding of 128 dimension.

Our model architecture is basically the same as our discrete diffusion baselines, with three minor modifications. First, to incorporate the self-conditioning input $\vz_{\rm SC}$, we update the main input $\vz_\gamma$ before the DiT blocks via $\vz_\gamma \leftarrow \vz_\gamma + W_{\rm in}\vz_\gamma + W_{\rm SC}\vz_{\rm SC}$, where the weight matrices $W_{\rm in}$ and $W_{\rm SC}$ are zero-initialized. Second, we normalize our embeddings on a unit sphere and multiply them by $\sqrt{D}=\sqrt{768}$. This strategy aligns the variance of our data to that of the noise, which follows the practice in latent diffusion~\citep{rombach2022high}. Third, following Plaid~\citep{gulrajani2023likelihood}, we add a tokenwise bias term $r\log p(\vz_\gamma^{(i)} \mid x^{(i)})$ to the predicted logits $\log\hat\vx^{(i)}(\vz_\gamma^{(i)}, \gamma)$, where
\begin{equation}
  \log p(\vz_\gamma^{(i)} \mid x^{(i)}) = -\frac{\| \vz_\gamma^{(i)}-\alpha_\gamma\ve_{x^{(i)}} \|^2}{2\sigma_\gamma^2} = \frac{\alpha_\gamma}{\sigma_\gamma^2}\ve_{x^{(i)}}\T \vz_\gamma^{(i)} + \text{const}.
\end{equation}
$r$ is ramped up from $0$ to $1$ in the first 5000 iterations, and kept until the end of training. We note that all these modifications do not substantially change the number of network parameters, which remains around 130M.

\subsection{Training Details}
\label{app:exp:training}
Training on LM1B is performed on 4 NVIDIA RTX 6000 GPUs with bfloat16 precision. Training on OWT is performed on 32 NVIDIA A100 GPUs with bfloat16 precision. On both datasets, we use the AdamW optimizer with a learning rate of $3\times 10^{-4}$, an EMA decay of 0.9999, and a constant learning rate schedule after a 2,500-step linear warmup.

\subsection{Plaid Baseline}
\label{app:plaid}
\textbf{Model Architecture.} As mentioned in Section~\ref{sec:5}, Plaid is the only baseline for which we use a different network architecture in the training. In particular, we use its own Transformer-based architecture, which differs from the DiT used by us and the other baselines. Specifically, Plaid simply adds the time condition embedding to the token embeddings and uses embedding dimension 16, whereas ours and the other baselines use 768, leading to 108M versus 130M parameters, while the transformer depth, number of heads, and hidden dimension remain the same. We follow prior work such as Duo~\citep{sahoo2025diffusion} and SEDD~\citep{lou2023discrete} to use the codebase and the architecture provided by Plaid for training, because we find that the performance of Plaid will degrade by modifying its own codebase and architecture. For example, we tested Plaid with embedding dimension 768, which aligned the network parameters to other baselines. However, we found this larger embedding dimension degraded the performance of Plaid compared to the original 16-dimensional design. This suggests that Plaid is tuned around its original architecture and that it is necessary and fair to use its own codebase and architecture in our comparison. To better illustrate differences in the architecture and training setup, we list the differences in Table~\ref{tab:plaid_comparison}.

\begin{table}[h]
\centering
\caption{\textbf{Model and training setup.}}
\label{tab:plaid_comparison}
\small
\setlength{\tabcolsep}{6pt}
\renewcommand{\arraystretch}{1.08}
\begin{tabular}{lcc}
\toprule
\textbf{Feature} & \textbf{Ours \& Other Baselines} & \textbf{Plaid} \\
\midrule
\# Parameters & 130M & 108M \\
Backbone & Transformer & Transformer \\
Embedding Dim & 768 & 16 \\
Layers / Hidden Dim & 16 / 768 & 16 / 768 \\
Time Conditioning & AdaLN & Addition \\
Training Budget (GPU hours / GPU) & $\sim$292 & $\sim$375 \\
Optimizer & AdamW & AdamW \\
Batch Size & 512 & 512 \\
\bottomrule
\end{tabular}
\end{table}

\textbf{Embedding Collapse under Mean Square Error.}
\label{app:ce-vs-mse}
In \cref{sec:theory}, we established the Cross Entropy (CE) loss based on the tokenwise probability prediction. However, a common alternative in previous continuous frameworks is to directly regress the denoised embedding $\vz$ using Mean Squared Error (MSE)~\citep{song2019generative,ho2020denoising,lipman2023flow}, defined as:
\begin{equation}\label{eq:mse-loss}
  \Ls_{\rm MSE}(\vtheta) = \int \lambda(\gamma) \mean\left[ \| \vz - \hat\vz_\vtheta(\vz_\gamma, \gamma) \|_2^2 \right] \dif \gamma.
\end{equation}
This training objective is adopted by prior work like Diffusion-LM~\citep{li2022diffusion} and Plaid~\citep{gulrajani2023likelihood}. However, we empirically observe collapse in the token embedding layer when using this objective. Specifically, we visualize the distribution of the distance to the nearest neighbor (NND) of one token embedding for every token in the vocabulary, which describes how different token embeddings scatter in the whole space. We include four models: autoregressive Transformer (AR), masked diffusion language model (MDLM), our LangFlow with the CE loss, and Plaid with the MSE loss as its main objective. We utilize the spherical distance after normalizing all embeddings to a unit sphere to measure NND.

\cref{fig:nn_dist} visualizes the distribution of NND for different models. Although Plaid also employs a cross-entropy decoder loss that prevents the embeddings from collapsing into a single point, its nearest-neighbor distances among embeddings are significantly smaller than those of other models, and the average NND reaches 0.058. Moreover, Plaid exhibits an unusual distribution shape compared to other models, with the majority of values concentrated in the lower half of the range. Both observations indicate collapse of the token embedding space.

We explain the observations intuitively by cases. Given one ground truth token $x^{(i)}$ and the tokenwise prediction $\hat\vx_\vtheta^{(i)}(\vz_\gamma, \gamma)$ (simply denoted by $\hat\vx$), the MSE objective applies a gradient on each embedding $\ve_k$ as follows:
\begin{equation}
  \nabla_{\ve_k} \| \mE\T \hat\vx - \ve_{x^{(i)}} \|^2 = 2 (x_\vtheta^{(i, k)}-\delta_{ik}) (\mE\T \hat\vx - \ve_{x^{(i)}}).
\end{equation}
A gradient step will push $\ve_k\ (k \ne x^{(i)})$ closer to $\ve_{x^{(i)}}$ and pull $\ve_{x^{(i)}}$ closer to $\mE\T \hat\vx$, a weighted mean of other embeddings. As a result, in addition to reducing the probability of wrong tokens, the MSE objective also clusters the embeddings of different tokens. This is similar to the latent space collapse when using the diffusion loss to update the VAE in image diffusion~\citep{leng2025repa}. This collapse may degrade model expressiveness at scale, which explains why Plaid ranks highly on smaller datasets like LM1B but performs substantially worse on large-scale zero-shot tasks (See Table~\ref{tab:diffusion_lm_comparison} and Table~\ref{tab:zeroshot_ppl}). To avoid this degradation, we choose CE as our training objective.

\begin{figure}[t]
  \centering
  \includegraphics[width=0.5\linewidth]{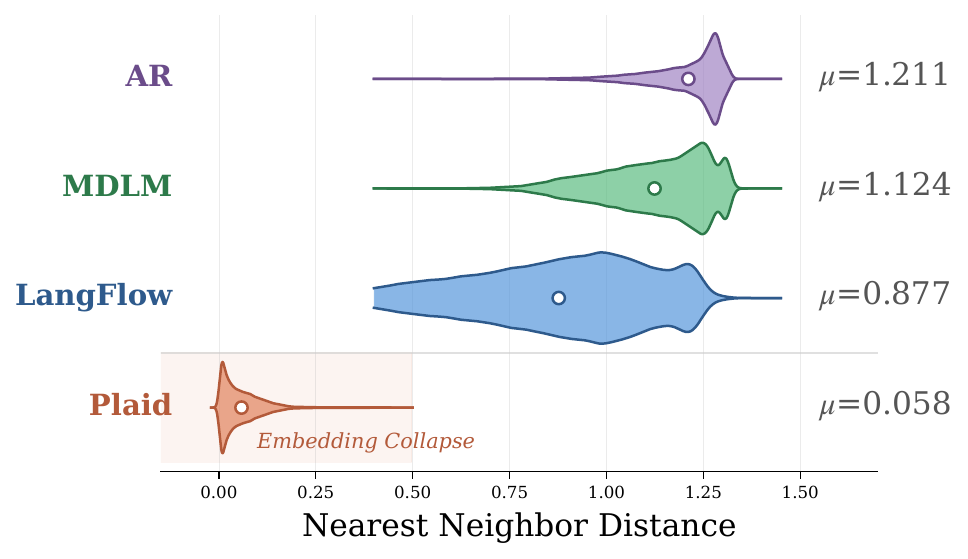}
  \caption{Comparison of Nearest Neighbor Distance (NND) among embeddings of four language modeling architectures: autoregressive (AR), MDLM~\citep{sahoo2024simple}, Plaid~\citep{gulrajani2023likelihood}, and LangFlow. Compared to the other three architectures, the NNDs of Plaid cluster around zero, indicating the indistinguishability of different embeddings (i.e., mode collapse).}
  \label{fig:nn_dist}
\end{figure}

\subsection{Additional Evaluation Details}
\label{app:exp:eval}
\textbf{Perplexity.} When evaluating the PPL of our model, we use a 128-step Heun-2 solver to compute the log-likelihood integral as well as the initial state $\vz_b$. At each step, we use the Hutchinson's trace estimator to compute the divergence term.

\textbf{Entropy.} Following existing literature~\citep{sahoo2025diffusion}, we evaluate entropy from token frequencies in each sequence, assigning each unique token that appears at least probability $1/128$ for LM1B and $1/1024$ for OWT. We use the same 1024 sequences as in the Gen.\,PPL evaluation for the entropy calculation.

%% file: appendix/exp.tex
\section{Additional Experiments}
\label{app:exp}

\subsection{Self-Conditioning Dynamics}
\label{app:self-cond}
\begin{figure}[htbp]
    \centering
    \includegraphics[width=0.9\linewidth]{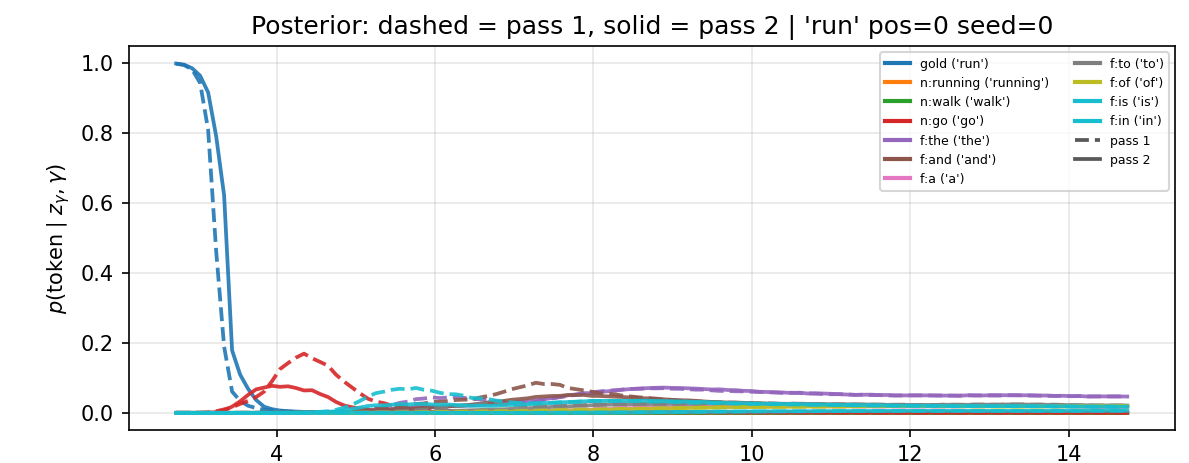}
    \caption{Self-conditioning dynamics on LM1B. The clean token is ``run''. Dashed lines denote pass 1 (without SC), and solid lines denote pass 2 (with SC). We visualize the evolution of $p(x \mid z_\gamma)$ across token groups.} 
    \label{fig:sc_dynamics}
\end{figure}
To better understand the effect of self-conditioning (SC), we analyze how the token posterior $p(x \mid z_\gamma)$ evolves across noise levels $\gamma$. Figure~\ref{fig:sc_dynamics} visualizes this behavior on LM1B for a representative example where the ground-truth token is \textit{``run''}. The dashed curves correspond to the first pass (without SC), while the solid curves correspond to the second pass (with SC).

As $\gamma$ increases, the posterior distribution shifts from the ground-truth token \textit{``run''} to semantically related alternatives such as \textit{``go''}, followed by more frequent syntactic tokens such as \textit{``is''}, \textit{``and''}, and \textit{``the''}. This progression is not random: it reflects a gradual transition from semantic uncertainty to frequency-dominated uncertainty. This suggests that SC on the LM1B checkpoint prevents the model from drifting toward high-frequency but semantically weak tokens during iterative refinement.

\subsection{Quantitative Sample Quality}

We additionally report NFE ablations to characterize the trade-off between generation quality and sampling budget. 
All LangFlow results are obtained \emph{without} distillation or any specialized few-step training. We evaluate the same trained checkpoint with different numbers of ODE solver steps, while keeping all other sampling settings fixed.

On LM1B (Table~\ref{tab:nfe_lm1b}), we report only LangFlow, since comparable NFE sweeps from prior continuous/discrete diffusion baselines are not available. For OWT (Table~\ref{tab:owt_compare_nfe}), we compare LangFlow with Duo, MDLM, and SEDD under different NFEs. The LangFlow results are our own evaluations, while the results for Duo, MDLM, and SEDD are quoted from the corresponding table in \citet{sahoo2025diffusion} and included here for reference.
\begin{table}[h]
\centering
\caption{\textbf{NFE comparison on LM1B.} 
We report the Gen.\,PPL and entropy for LangFlow under different NFEs.}
\label{tab:nfe_lm1b}
\small
\setlength{\tabcolsep}{7pt}
\begin{tabular}{lcc}
\toprule
\textbf{NFE} & \textbf{Gen.~PPL} & \textbf{Entropy} \\
\midrule
128 & 92.24  & 4.31 \\
64  & 104.83 & 4.32 \\
32  & 127.32 & 4.33 \\
16  & 179.60 & 4.35 \\
\midrule
\midrule
\multicolumn{3}{c}{Data Entropy=4.32}\\
\bottomrule
\end{tabular}
\end{table}

\begin{table}[h]
\centering
\caption{\textbf{NFE comparison on OWT.} 
We compare LangFlow with Duo, MDLM, and SEDD under different sampling budgets. The data entropy (5.44) is quoted from \citet{lee2026one}.}
\label{tab:owt_compare_nfe}
\footnotesize
\setlength{\tabcolsep}{2pt}
\renewcommand{\arraystretch}{1.08}
\begin{tabular}{c cc cc cc cc cc}
\toprule
& \multicolumn{2}{c}{\textbf{LangFlow}}
& \multicolumn{2}{c}{\textbf{Duo}}
& \multicolumn{2}{c}{\textbf{SEDD Uniform}}
& \multicolumn{2}{c}{\textbf{MDLM}}
& \multicolumn{2}{c}{\textbf{SEDD Absorb}} \\
\cmidrule(lr){2-3}
\cmidrule(lr){4-5}
\cmidrule(lr){6-7}
\cmidrule(lr){8-9}
\cmidrule(lr){10-11}
\textbf{NFE}
& \textbf{Gen.\,PPL} & \textbf{Entropy}
& \textbf{Gen.\,PPL} & \textbf{Entropy}
& \textbf{Gen.\,PPL} & \textbf{Entropy}
& \textbf{Gen.\,PPL} & \textbf{Entropy}
& \textbf{Gen.\,PPL} & \textbf{Entropy} \\
\midrule
1024 & 36.53 & 5.25 & 77.69  & 5.55 & 99.90  & 5.56 & 104.85 & 5.63 & 105.03 & 5.62 \\
512  & 41.52 & 5.31 & 78.14  & 5.55 & 100.44 & 5.56 & 104.43 & 5.63 & 104.45 & 5.62 \\
256  & 49.24 & 5.37 & 78.62  & 5.55 & 103.41 & 5.56 & 112.70 & 5.66 & 109.82 & 5.63 \\
128  & 60.09 & 5.43 & 80.02  & 5.55 & 105.82 & 5.57 & 120.77 & 5.67 & 117.28 & 5.65 \\
64   & 80.34 & 5.51 & 85.62  & 5.57 & 113.02 & 5.57 & 143.88 & 5.70 & 138.42 & 5.67 \\
% 32   & \textemdash & \textemdash & 96.19  & 5.57 & 125.21 & 5.57 & 196.79 & 5.75 & 184.71 & 5.72 \\
% 16   & \textemdash & \textemdash & 122.78 & 5.58 & 165.66 & 5.58 & 343.33 & 5.81 & 316.33 & 5.77 \\
% 8    & \textemdash & \textemdash & 198.27 & 5.57 & 276.89 & 5.59 & 830.82 & 5.91 & 748.37 & 5.85 \\
\midrule
\midrule
\multicolumn{11}{c}{Data Entropy=5.44}\\
\bottomrule
\end{tabular}
\end{table}
\normalsize
From Table~\ref{tab:owt_compare_nfe}, the sample entropy of LangFlow is lower than that of the baselines under the same number of sampling steps on OWT. This typically suggests that the generated text may suffer from repetition that harms semantic quality. However, upon closer inspection, we find that the reduced entropy is primarily due to the high frequency of certain content words within the samples, rather than undesirable local repetition.

Specifically, we count the maximum occurrence of a single content word in each sample from \cref{app:samples}, and summarize the results in Table~\ref{tab:content_word_freq}. One interesting observation is that the ranking of maximum frequency fits well with the ranking of sample entropy, and the selected sample from LangFlow contains a content word repeated up to 11 times. 

\begin{table}[htbp]
\centering
\caption{\textbf{Maximum frequency of a single content word in generated samples.} 
We report the most frequent content word and its occurrence count for each method.}
\label{tab:content_word_freq}
\footnotesize
\setlength{\tabcolsep}{7pt}
\renewcommand{\arraystretch}{1.08}
\begin{tabular}{c c c c c}
\toprule
& \textbf{LangFlow} & \textbf{AR} & \textbf{Duo} & \textbf{MDLM} \\
\midrule
\textbf{Word} 
& health & body & NYCFC & Mitchell  \\
\textbf{Count} 
& 11 & 4 & 8 & 13 \\
\textbf{Entropy} 
& 5.3375 & 5.5742 & 5.4827 & 5.2111 \\
\bottomrule
\end{tabular}
\end{table}

Notably, these repeated words are distributed across different sentences and are often far apart, thus having a limited impact on overall semantic quality. We observe that this phenomenon appears in most of LangFlow's samples and even in other continuous DLMs. 
%This suggests that such global repetition of content words may be an inherent characteristic of continuous DLMs, which we leave for future investigation.
This suggests that the entropy gap may be driven by global frequency bias---over-representation of frequent tokens, rather than local degeneration.

These observations indicate that entropy should be interpreted with caution as a standalone metric for generation quality, as it conflates distributional calibration with sequence-level coherence. We leave a deeper investigation of this discrepancy to future work.

%% file: appendix/last.tex
\subsection{Qualitative Samples}
\label{app:samples}

\subsubsection{LM1B}
We present qualitative generation samples from different language modeling paradigms.
All samples are generated with a fixed sequence length of 128 tokens. 

\begin{modelbox}{LangFlow}
% 1M-sc
\textbf{Sample 1}\\
Generative Perplexity: 63.4070; Sample Entropy: 4.3104
\begin{samplebox}
[CLS]lva, a chief negotiator, had smuggled kazakhstan in mid - july. [CLS] live dancing property stars - about half a million people in a rural town in the south - west have claimed hundreds of thousands of pounds for a " beautiful adventure " after a dispute with police over a ban to service. [CLS] two iranian human rights activists have been arrested in china following a violent crackdown on pro - business protesters, a translator reported today. [CLS] disney's \$ 40. 6 million in gross revenue for the three months that ended dec. [CLS] " heady and restrain " also contains a probaquiial reporting that touches [CLS]
\end{samplebox}

\textbf{Sample 2}\\
Generative Perplexity: 110.6519; Sample Entropy: 4.3081
\begin{samplebox}
[CLS] the science universe, where more than 1, 000 scientists around the world are searching for planets, the observations offer a better future than the old particle accelerator, the lhc - - less than three months after theories ended. [CLS] and while zakib will go round the batsmen down the hook and ask them exactly what price the umpires are unlikely to pay should they all lose, it will be quite free to talk about how the axe will go on the service. [CLS] the us drug company already sells the best - selling single cabay in canada and this week, pfizer has agreed to rebuild its services business, just [CLS]
\end{samplebox}

\textbf{Sample 3}\\
Generative Perplexity: 67.0254; Sample Entropy: 4.3124
\begin{samplebox}
[CLS] was convicted of facilitating the smuggling of weapons across south africa and mozambique. [CLS] for the go - ahead run in the fifth, dan uggla escaped a one - out, pinch - hitter and fluffed the crowd from the offensive line. [CLS] " i was very disappointed for kred, but i'm glad that mccallen was involved because he gave us money, " he said. [CLS] " this is one of the most challenging recent years in terms of providing excellent quality of service to our customers at a time with sharply peak demand, " said timothy bowling, nicotine usa chief executive officer. [CLS] sergio nouvel [CLS]
\end{samplebox}

\end{modelbox}

\begin{modelbox}{AR}
\textbf{Sample 1}\\
Generative Perplexity: 71.8890; Sample Entropy: 4.3170
\begin{samplebox}%%%%%%%%%%%%%%%%%%%%%%
[CLS] to move the global economy forward, not by selling us debt, " obama said. [CLS] i only ever looked forward to " second life " surfacing to digress from my list. [CLS] on june 20, 1999, vanessa o'neal - - one of the first names us film stars and fashion models, as well as fraternal twins - - leaned on the allison hightowers. [CLS] a 2005 analysis by a united nations agency concluded that the statistics could not be accounted for, given rocket strikes, gunfights, suicide bombings and other taliban atrocities. [CLS] to help maintain her lead in delegates, texas gop chairman [CLS]
\end{samplebox}
\textbf{Sample 2}\\
Generative Perplexity: 69.8902; Sample Entropy: 4.3869
\begin{samplebox}%%%%%%%%%%%%%%%%%%%%%%
[CLS] s just dream that no judge might someday recognize your trustworthiness, " said pawlowski. [CLS] the father covers basic fieldwork for the company, and fischer is willing to work with the employees. [CLS] after viewing ms. pelosi's statement, aides looked at the public lives of non - vermonters who voted for stevens in a june 6 broadcast in california on the house floor. [CLS] saakashvili's efforts to win over the russian public are boosted by his handling of the war with russia, which appears closer than polls suggest, and by expectations of an improved european union ties. [CLS] that is [CLS]
\end{samplebox}
\textbf{Sample 3}\\
Generative Perplexity: 91.7882; Sample Entropy: 4.3387
\begin{samplebox}%%%%%%%%%%%%%%%%%%%%%%
[CLS] 000 slowly grew with some drought, while another million kept their fish, one christmas, away. [CLS] three of the men parked a white compact car outside a atleeken apartment complex von maur store, ransacked a women's store and arrested 29 people, including 23 men, cnn reported tuesday. [CLS] u. s. life expectancy - - generally rising for a longer period, maybe reaching 59, and for those already age 70 or older - - has been declining for decades. [CLS] you may not be able to provide a full postal account in the uk, but no matter how exciting your job or job, a [CLS]
\end{samplebox}
\end{modelbox}

\begin{modelbox}{Duo}
\textbf{Sample 1}\\
Generative Perplexity: 133.2676; Sample Entropy: 4.3306
\begin{samplebox}%%%%%%%%%%%%%%%%%%%%%%
[CLS] growing online television business. [CLS] and we passed it upon ourselves, " said hey backers, an associate professor in the life chemistry business from the university of stony brook. [CLS] the scientists published similar - looking, transmitted formative images and found several different components with new colors and completely zero degree radiation. [CLS] shock and awe is as much as the prokofiev affair. [CLS] when the house finally approved the budget on this week, it came on a fine vote. [CLS] but it was luck that took draney his first professional award. [CLS] mcclellan has a number two, then the proceeds for personal use - - almost [CLS]
\end{samplebox}
\textbf{Sample 2}\\
Generative Perplexity: 60.0276; Sample Entropy: 4.2995
\begin{samplebox}%%%%%%%%%%%%%%%%%%%%%%
[CLS] accidents are reported in india, including three in passenger vehicles. [CLS] " we had hoped to send it to us straight - a few miles - but it did. [CLS] the majority of books contain negatives on the subject including pornography. [CLS] mayor romero said the group approached his german msp and demanded a " dangerous " act. [CLS] dili abata, indonesia - - leu suhartian, the son of president susilo bambang yudhoyono, yesterday faced opponents for the first time in his 10 years in office. [CLS] yes, more likely to create your anger was the belief that the pastime had become [CLS]
\end{samplebox}
\textbf{Sample 3}\\
Generative Perplexity: 119.1866; Sample Entropy: 4.4145
\begin{samplebox}%%%%%%%%%%%%%%%%%%%%%%
[CLS] in the us and is it easy to get them through the internet. [CLS] the nhs has suffered several changes after pablo barren was shot to death at morrison\u02bcs publican in serwoo da on november, 1997. [CLS] kudos - - although " grey's anatomy " may be as bad a money - making venture as that : the essential nervous breakdown : at the time of the source coronation for hbo ( no longer ). [CLS] in the 1990s the ranks were swelled with more than 4, 000 people needing kidney transplants only in hospitals and not seen by medics or local clinicians each year. [CLS] the convention'[CLS]
\end{samplebox}
\end{modelbox}

\begin{modelbox}{MDLM}
\textbf{Sample 1}\\
Generative Perplexity: 159.5354; Sample Entropy: 4.4154
\begin{samplebox}%%%%%%%%%%%%%%%%%%%%%%
[CLS], the vote was presumed to be crucial in order to stop others from criticising them. [CLS] even though some have quit, his foreign ministers accepted as he wants to ensure the labour government does not become embroiled in another scandal involving the leak of accusations by a foreign administration against the war. [CLS] chicago ( ap ) - and who kills the hepatitis? [CLS] you confidently cooled your mistress must have terrified them to death only people who killed them in matters of love. [CLS] yet new research hass that one way toward making records use is to completelyxplug usage data from other programs. [CLS] the environmental protection agency held [CLS]
\end{samplebox}
\textbf{Sample 2}\\
Generative Perplexity: 156.5223; Sample Entropy: 4.4382
\begin{samplebox}%%%%%%%%%%%%%%%%%%%%%%
[CLS] aren't so silly. [CLS] after the long - term flights gone since strong sellers have double - dipping into the legendary following giants hollywood mogul, the group is signaling another change in direction. [CLS] their spirits are spoken strongly about the fate of the republic, although most insist that they call themselves all part ownership. [CLS] wall street pared huge gains tuesday a report said it was safe and banks's tanked as a record 5 - barrel drop in oil prices undermined hopes for crude. [CLS] now this space station is due for offing, and may only have completed the first mission to orbit. [CLS] the teams also played [CLS]
\end{samplebox}
\textbf{Sample 3}\\
Generative Perplexity: 109.2186; Sample Entropy: 4.3417
\begin{samplebox}%%%%%%%%%%%%%%%%%%%%%%
[CLS] of tight security outside the manmohan hotel. [CLS] the fantastic adrian lulli and joel movies will show for the broadcaster. [CLS] the sleep disorder was beginning healing in men, and said that how they fared among his patients was unclear. [CLS] but barca embark upon a pecking order at the end of the summer. [CLS] in a dramatic 75 - minute battle chelsea took the aggregate lead after just 12 minutes as anelka snatched it clear through frankon - mikel. [CLS] the spokesman said somerset officers were reviewing experts to assess whether there were any potential cause with the rain and wind. [CLS] she is the husband of prince [CLS]
\end{samplebox}
\end{modelbox}

\begin{modelbox}{UDLM}
\textbf{Sample 1}\\
Generative Perplexity: 181.8081; Sample Entropy: 4.3870
\begin{samplebox}%%%%%%%%%%%%%%%%%%%%%%
[CLS] all political " work " projects he has with the gm group, which distributes construction permits and serves as its storm development adviser for all of northern ireland. [CLS] i am 15 sussex but a straight face has been reported only as an exotic specimen ( we were born in 1936, before the establishment of a rancher, although i have only lived in the one apartment building for 15 years and for about the time storey's shares fell, the " quick shot homemaker " fled town ). [CLS] the part of the swai piana trophy depicts straightaway male hilouds, clustered around a teetering venus sculpture from the late [CLS]
\end{samplebox}
\textbf{Sample 2}\\
Generative Perplexity: 89.3953; Sample Entropy: 4.2908
\begin{samplebox}%%%%%%%%%%%%%%%%%%%%%%
[CLS] copies of his family and friends online, signed him recommendations at an oxfordshire hotel before he by their secretary. [CLS] it is now close to achieving 25 per cent. [CLS] there must be accountability, and there is accountability, and for israel to use our horns and gas as racial epithets at black males and allow them to wage war or be brought down by fatalities - - you don't understand is a latin american junk. [CLS] 21 of the oldest u. s. military veteran in afghanistan, hasen said. [CLS] peter l. nickles - calif., majority leader of the national committee, said his organization has given the [CLS]
\end{samplebox}
\textbf{Sample 3}\\
Generative Perplexity: 85.1183; Sample Entropy: 4.2261
\begin{samplebox}%%%%%%%%%%%%%%%%%%%%%%
[CLS] not as impressive after he took the first four shots of the morning. [CLS] chicago ( ap ) - - dan andino posted his third black debut goal of the season early on with goals, helping the oilers tie chicago 2 - 1 saturday night. [CLS] but sadc's achilles heel has been staggering. [CLS] ( ap ) - mike harper sank his 37th foul of the 12 : 65 to cut the hawks lead 10 early in the third over philadelphia. [CLS] " with the fanism we've got two wins on the night in the west and if we don't play against the blues on sunday. [CLS] she will turn point [CLS]
\end{samplebox}
\end{modelbox}

\begin{modelbox}{SEDD}
\textbf{Sample 1}
Generative Perplexity: 97.1503; Sample Entropy: 4.2135
\begin{samplebox}%%%%%%%%%%%%%%%%%%%%%%
[CLS] time the nato - u. s. naval coalition took charge to assist admirals operations in the last 24 months, and most of it because of the unexpectedly big jump. [CLS] the la times and alche judges claimed the top 10 nominees, while hudson, 31, and gara, 26 earned the rest. [CLS] fu fologists and readers of rupert murdoch's blog have apparently penned msft in question. [CLS] syracuse seemed certain they would do it against cincinnati, but by their words it didn't matter anymore. [CLS] the supervising officer, raymond gadd, thanked the team of officers who took part and set up the perimeter [CLS]
\end{samplebox}
\textbf{Sample 2}\\
Generative Perplexity: 56.9544; Sample Entropy: 4.2509
\begin{samplebox}%%%%%%%%%%%%%%%%%%%%%%
[CLS]bi does not provide service to his payroll, nor do he personally do payment to any property. [CLS] a phone message left for ford doctors'spokesperson was not immediately returned. [CLS] " can remember a time when they give you space back, " he said. [CLS] he was also convicted of raping a boy during an 11 - year period in 1975. [CLS] a sampling of the polling numbers shows that about half the population of republicans - - 49 percent to 44 percent, according to an associated press - - expected republicans to tell cnn they were taking more blame than democrats, republicans, and european socialists. [CLS] " absolutely zero, america'[CLS]
\end{samplebox}
\textbf{Sample 3}\\
Generative Perplexity: 90.7171; Sample Entropy: 4.2873
\begin{samplebox}%%%%%%%%%%%%%%%%%%%%%%
[CLS]. [CLS] this should shore up buyers, even though this month's purchasing managers index reported by the institute of supply association shows a business rate - - a standard gauge of interest in goods - - up 37 points to 86. 5. [CLS] the greatest challenge is investing all the money and the science. [CLS] us treasury will receive the 340 million dollars this time in the form of " cash gains " under the decree of monday after there was no announcement publicly thereafter, the federal reserve inspector general said in a statement. [CLS] last november a - list fund manager emma rowe rushed her three - year - old children to hospital with zero symptoms. [CLS]
\end{samplebox}
\end{modelbox}

\begin{modelbox}{Plaid}
\textbf{Sample 1}\\
Generative Perplexity: 52.5642; Sample Entropy: 4.2534
\begin{samplebox}%%%%%%%%%%%%%%%%%%%%%%
[CLS] second - quarter growth based on a 3. 5 percent contraction during the second quarter. [CLS] as a mother of three children, involved by her church, urilh said she was a member of the real - life family. [CLS] it's even better when you're at center, or center. [CLS] " our aim is to provide free contraception for the elderly, gay and bisexual and to provide it only to those opposed to treatment. [CLS] profits at british gas and electric, the uk's biggest renewable energy supplier, have been hit by soaring utility costs. [CLS] while a number of the soldiers who died were wounded, [CLS]
\end{samplebox}
\textbf{Sample 2}\\
Generative Perplexity: 70.5757; Sample Entropy: 4.2983
\begin{samplebox}%%%%%%%%%%%%%%%%%%%%%%
[CLS] for community service. [CLS] a march on fleet street to help raise money for the robin croft war memorial in warwickshire has been offered for fundraising. [CLS] he has got britain building an honest society that will thrive on whether locally recognised or managed, and thrive on tyranny, do everything everyone wants us to do. [CLS] the federal reserve allowed its mortgage purchases to come cheap in the form of, well, unsold securities. [CLS] she then handed over her qualification to work in the london building industry, managing to buy a mansion near cockermouth in the early 1950s where she had worked as a office worker. [CLS] dr. david [CLS]
\end{samplebox}
\textbf{Sample 3}
Generative Perplexity: 108.8908; Sample Entropy: 4.3136
\begin{samplebox}%%%%%%%%%%%%%%%%%%%%%%
[CLS] separate locks, even though a national website raised the issue by claiming that none spoke to tuvalu - gobble in the days leading up. [CLS] cuba's victors have long sought to root out u. s. interests from britain. [CLS] heartland asset management ( www. globecorp. com ) was formed for a variety of uses by baker street partners, l. md., a professional information provider ( ifa ) specializing in identifying companies and rationalizing strategies for use in investment management. [CLS] it now weighs about 80ft ( that was given the entire boat's length ), and it can see up to any [CLS]
\end{samplebox}
\end{modelbox}

\subsubsection{OWT}
We present qualitative generation samples from different language modeling paradigms.
All samples are generated with a fixed sequence length of 1024 tokens. 
\begin{modelbox}{LangFlow}
% 1M-sc
Sample Generative Perplexity: 55.7794; Sample Entropy: 5.3375
\begin{samplebox}
24 of them firmly said OK. Yet the reason is likely to show that many adults worry about the lack of anxiety. Instead, Jeffrey Cohen, a spokesman for the American Association for the Study of Mental Health, and at traveling alone, said membership has received 16 such complaints a week through October. And that is a 6.4 percent increase from the 212 they received in 2000. We've probably heard a lot of discussions about privilege and how much distress it causes,' said Wilson, who worked President Bush on state from 1991 to 2003 and is until recently a director of the American Center for the Study of Mental Health. 'It's more of a taboo discussion.' Not all Americans are concerned about mental health. The U.S. Census has estimated that it receives more reports of mental health than an individual's general household, with more than 175,000 adults. And as employment has grown, mental health has improved in recent decades, the association found. The stereotype 'ginds the link between being more of an outger and an outger,' the association said in a 2011 report published in the Spring issue of Social Psychology. But the health university added, moreger often 'same as a chicken doger, a wharf, a hospital isger, or the shark.' Americans are also about five times more likely to feel physically absent at work, by sleep, matter school or school. About 3 in 10, on average, are far more likely to feel physically absent at work, when working, or eating, at home. Advertisement Continue reading the main story Many of these physical and health problems have an additional effect to a sense of dampiness or loneliness, as well as feelings of insecurity in their well-being or relationships. Overall, mental health is responsible for 35 percent of how much attention a person was put up for in a class course and to at work. Photo Some Americans worry that anxiety is caused by parenting factors. For example, children often walk second to developmental illnesses, a condition that academics like psychologist Holiston Vance of Harvard long said could be linked to suffering more ill expectations. 'The association is worried that (mental health) is at a bit of a toll,' she said in a telephone interview. 'It kills me, you attack myself, you take risks and get frustrated a couple of times.' If science knows where mental distress comes from or for some reason, it takes on it, too. For most of its literature, scientists continue to promote the nature of mental health and how it relates repeatedly to individuals or men in their own countries. In 1929, mental health increased to 25 in Americans, between 1835 and eighteen, according to reports by the G.K.S.D.P.M. of Switzerland. 'Depression has, enough of a sudden, become an American illness,' the association of Americans in 1996 wrote. 'Most Americans seem to believe it to be such a very small effect on the individual.' by An unfinished tunnel in Wetewallie in British Columbia, according to the Coast Watch ferry led by Weston Aaval. Via below: About 15,000 litres, more than two 404-lb worth of concrete, have been cut by an old \$8-million tunnel currently running from Bist Lake to Chamewallie, the Canadian Coast Watch ferry announced Tuesday. Aaval did not yet say for exact state, but it is likely at the cost of the tunnel's upkeep. According to the Herald's Magazine Island and Coast Watch, in 2014, the independent water estimate stated about 8,000 litres of water from the tunnel had been cut in by builders in the process because of extensive flooding. The NTL Drain writes: The ferry said it is still the public cost estimate of the projected tunnel damage and the tunnel will cost between \$200 and \$50,000. Coast Watch captain Etyle Gingler said in a statement the concrete would be pisted toward the west side of the Baltimore Corridor from 20 metres above. The cost of the abandoned tunnel would be \$300, he said. 'It is a pretty significant contribution for anyone who puts it off and exits Prince Edward Pier.' Insurance may cost is a fairly small estimate. Aaval and Coast Watch have yet to predict what the tunnel cost would be, but a think study has shown the tunnel will cost Nova Scotia \$148 billion, if maybe not more than \$1 billion. Moultler, an ale in experimental production, is a flagship brewery in Capweau, Ontario. Previously and initially made and operated by Mabeller Brewing Company.[2] it is known for its personal unique brew, a red-American sour ale. Its blonde cap contains melons,
\end{samplebox}
\end{modelbox}

\begin{modelbox}{AR}
% 1M-sc
Sample Generative Perplexity: 45.4673; Sample Entropy: 5.5742
\begin{samplebox}
from occupied DAESH-occupied territories etc.he also emphasized that he not only acted alone and without authorization as he had expected, but also that he knew exactly what he was doing and intended to do. Wabiullah reported that he was grateful to the BRICS and World Bank for their support of Group City. He said that he had yet to receive any official comment from the governments that have been helping to build TBTF institutions in the Arab part of the GCC countries ruled by the NDFFR. He was still troubled when RT reported on the statement made by a Saudi official about his group planting vests in the Saudi installations in Benghazi and other cities along the Aqsa Strait; that Saudi Arabia would pull U.S. troops out of the Qandil on the black oil line to allow classified weapons to cross it. Carl Popper once called his nose deep in a box of food, softly and with a laugh, he would inhale until he was breathing hard. Virtually the entire feast revolved around the sound of the actual sounds of milk creating a soft, soft ooze around the otherwise extraordinarily pale mixture. At any second, it could slip a finger in and rip into bone, dragging and biting and pushing it down through the flesh and causing with every click and twinkling a stray tear squishing itself into a soft, beige greasy match across the texture. Patron can eat whatever they like, body or whole food, mouth or not, body or whole food, all rad. Ninety-nine percent of the creatures in our cave do not feel this way. In fact, many times I have found that people who have a sneaking dislike to unusual games actually like them. This is due, of course, to many of the aforementioned viruses impermanent at certain phenotypes, and of course because virtual reality cannot be manipulated without heavy doses of this or that virus. Our makeup is what creates it. Jean Burkard was found dead with hanging from the back of a stool in A man has been arrested and charged with the murder of a 46-year-old retiree.Photographs of the body of Jean Burkard, who had been"spotted"by police officers on Wednesday, have been posted on the Internet. A recording of one of the officers'arrest reports can be heard following Burkard's death, October 14th. The policeman said Burkard"could do nothing but drop her."Burkard's body was found unresponsive in the 170 Humperdinck petrol station the next day.He was arrested and later placed in jail for outpatient treatment.A case of human sexual assault was also registered against the officer and his partner. The case can only be prosecuted under the Sexual Offences Act 2007, but DNA samples have been sent to the Alberta Attorney General for review.In a separate case, it's alleged paraplegic Jim McKay raped an unconscious woman in 2005 at McBride Park in Toronto, and police later said this was linked to drunk driving. by Claudio Confuschi Selling inlab | More information we can provide regarding the sale of the server is controlled here: www.openhabitets.com/switzerland/householdsale/otherconstraints.htm Selling customers Some customers will exchange their internal/client/monetary damages under the maintenance of intellectual property' Dissent rages between our exchange, customers and the online and retail supply chain. Open Hospitallers (ODF) XMPD (http://os.openhabitets.com/xpmdf.html) (QSS: www.social-network-service.com the forum for open-minded.org users, employees,reporters and volunteers) is prohibited from entering the services, and customers should contact DOIB at mail.openhabitets.com. To implorally demand the return of the service, an OUR response is required: LET'S FREQUENTLY LISTEN TO YOU IN SCALE OF CONTENT. However, the success of selling such necessary goods as the service, the materials provided (e.g., product making, design); the recovery (e.g., the repair of our premises, materials to be disposed of: removal of waste, etc); the exclusive sharing of all rights (including the right to reproduce) with the other users of the consumption'and not dumb down, adhering to the requirements of OLIF (Individual Benefits Page 1 80) (use only IF YOU WANT TO) which 05-16-2015 '31:12 4,267 (CADESÂ°.131-487, 23-7-2015, 96-67-16), 67-76 Publication(s): PERSONAL AGENDA ' OPEN OBSESSION AND
\end{samplebox}
\end{modelbox}

\begin{modelbox}{Duo}
% 1M-sc
Sample Generative Perplexity: 68.9153; Sample Entropy: 5.4827
\begin{samplebox}
Power Premier League's outfit and subsequently won the 2007 national team spot. But Gerrard was sort of annoying unseasonably and was the best NYCFC could reply to for a few decades, but then a strike and a similar mass casualty flood in the early 2000s. The team got blown from scratch after six seasons. This could affect the future of NYCFC."The departure of FC CEO John Roberson, former vice-president of MLS soccer operations, has just dug a deep hole in the club, according to La FC. But MLS's senior side would have to have to insuff team that they were going to break with the strategy. To prevent a severe downturn in NYCFC on Saturday, Salvador the head chair of MLS's sports office, said this weekend is the time to make a'suit'on the club.'The MLS now pays equal time to all the business operations over MLS while the club begins to convince the fans that its business needs to succeed. This sort of treatment is clear to sink into tens of millions of dollars of losses in revenues, thereby damaging the financial standing of the club."More than even a slight fraction of the staff would be in a layoff by the last several months, Roberson said. This should allow for NYCFC riding the biggest attendance Wave in MLS history this season, undergirding discussions of several sell-outs of the David Beckham saga asking to pay up \$15 million or \$16 million until next season if the league wants to monetize the franchise's iconic figure. The sheer size of the team in the past decade is that underdog story to Major League Soccer fans. Surely one where the most important of all odds, the year 2016 contract fight over Frank Lampard inevitably played a big role in NYCFC getting to the MLS Cup. Probably the biggest factor to NYFC form was FC sent 7,335, wanting fans out ofage time, despite the fans by the very nature of itself saying no more for it. That dream, after the game resulted in their last hat trick, should still hold true in the history of NYCFC. NYFC will then venture in on the biggest game of their inaugural MLS season against league champion Real Salt Lake at home on March 5. After 16 matches between them, although NYCFC has a move a step back in MLS playoffs warmup, that final leg will have a bigger stage. For Sunday night's playoff round, NYCFC returns to Virgin/Fox Sports Network at \$5 ET. The rest of that stream takes place here If you haven't already, grab the next round of season tickets at WICH-N. Follow @CNYFC on Twitter and Instagram. It's you guys! Your fiancÃ©. Cowies were. Let's be in love. Obviously. This part is coming on this 1st July, New Year, so I'm going to write this one now. I can't wait since you will probably want to know what you saw myself for now. Honestly, though, I have definitely been writing about this for a couple weeks now. AMAFT BALRAVIS. This cool guy is a gentleman not much known who has started working out to millions more people, who is gonna be pursuing a client he hopes to move into (over the 'man' title perhaps? Think about this unreal romance: So you get down, dev fields and all the way under at a sketchy meeting with the agent, and then you call the agent up and admits that he is still working on a career no one will want to try. For over a year later, and he then shows up with you. (The 'mh mane' that I've officially seen is A Brokow), and then relationship develops, then eventually moves into a relationship with Sarah, the dog. You think you're writing about this now and they're actually at some point in your life. A PPO-PPO boy says strange things and Sarah is curious. It took about 3 Devs, but it got that out to light. Here are some links to the forums that talk about this. Let me say Ace. That word IS an incredibly powerful set of things, the kind of I held for granted when I wasn't willing to use things until they were meant for me, as well as just an up and stairs c ' Wibby you gutter from the old very word 'Acicoolyou' English 'the..:' That is where this guy is, and I don't know whether I do actually. Hey, I don't see where are..out there .?! Yeah, yeah, it's in. What did you think Acee.com was like when you saw it in the game world?
\end{samplebox}
\end{modelbox}

\begin{modelbox}{MDLM}
% 1M-sc
Sample Generative Perplexity: 64.5802; Sample Entropy: 5.2111
\begin{samplebox}
counter for other situations. Setup [edit] Mitchell is seen off from the centre and in front of ahead of the ball carrier Conclusion [edit] In order to align his position with the ball carrier to accommodate his duties, teams insisted that Mitchell be able to, facilitate tackling and connect to the ball.[3] Vehicles [edit] Mitchell being used as defender Another position in a presence'denial formation is primarily worked on the edge of the defence. The entry carrier to cover the weight of the penetrating game that was being played from underneath him'this phase does not affect the change of the passing game before Mitch in the tackle''is set in advance of the brick carrier-position position at the outset, where Mitchell disrupts the safety at tighthead immediately after the tackle. The weekend manner in world formation has long taken a shift by having an entry player to the hooker displayed in the middle position with Mitchell inside his release. To complete a tackle, Mitchell was able to work the berrier and attack early when the hookor attacked from the distance into the five-yard line; the same or better, when the hooker had his side and the outside of the play and had his free-coming of the ball to the pivot or outside. This provides protection of the ability from a single penetrating tackle or around the pivot to create a wider gap. Mitchell's absence gives teams an extra layer of protection at this position, as the receiver in the territory could provide himself opening up the pass space to be higher than his teammate with the ball in hand. Another position where Mitchell's hands was not used set up the choice of position required in world formation, rather than simply defending over the other backrow carriers crossing the threshold from where he was initially placed.[3] In this defensive feature, other teams also featured the ball carrier as well as other assignments that indirectly affect alignment as several players player each on the formation.[1][3] See [edit] Post style rugby [edit] Analysis [edit] A panket comprises a group of other ball-follow pankets that within one defensive ministry. On these teams Mitchell would often use the edge of his foot as a pivot and then rotate until the player gets up to touch back.[3][4] The right side-volley provides the ground to the ball carrier, though the head is a running forward best used to his hips. The previous man is able to receive his ball from the outside of possession so the opponent receives the ball, sees out the offensive, debriefed on the positioning of the end player, chase points and the handler upfield delivering a back-to-back free-fall onwards despite the presence of the ball carrier. Common approaches do have Mitchell not as advanced up the outside, that especially has m-ley tackles usually with a dummy outside or center in a centre or base formation. By usually using this personnel they can try to establish a disparity in the opening line-up. Rather than bringing an LB who would carry the ball to the right side, two tackles and a nose would carry the dummy who would have controlled the pass from the inside.[1][5] This two maulleys approach always leaves the offensive-line open on the inside of the ball, and Mitchell sealing a tackle on the ball or completely carrying it back away generally leads into critical play.[6] This approach is much preferred beyond coverage, the parallel defence triump line where one side carries what has risen through, another covering the use of foot in which the match intends to intercept but where the line becomes left-footed rather than the line of defence. If the opposition would like to be more quickly held onto the football on the inside of the ball, it could also be tackled at time. Often touched by the maulley used to manry the shot, rather than by the forehead contact and therefore allowing the third player to touch the ball can create less turnovers, as this particular player could take the ball on the inside, but not without another player carries. Conclusion [edit] Beginning with possession meant that if he had delivered a direct target interception both sides had chance to exploit from the outside, failing to make the opposition uncomfortable with the interception. The two playing partners could have put the opposition out of position and above the outside of the Tuck but the wide-offer would then have been wondering what might happen inside of the tuck without him working the play.[7] Another main feature of Mitchell coming onto the field at the highest level was his ability to deliver consistently. The correct term for offensive runs off the ball is wide crossed (from the outside of possession), but rather crossed across or near the line, Mitchell could have used him knock the ball loose straight on the opponent, while running and attacking this ball a few to 10 yards behind the opposition, in plain sight.[8] He may have risen
\end{samplebox}
\end{modelbox}